\ifCLASSOPTIONcompsoc\usepackage{subcaption}
\def\bs{\boldsymbol}
\DeclareMathOperator*{\argmin}{arg\,min}
\newtheorem{theorem}{Theorem}[section]
\newtheorem{definition}{Definition}[section]
\begin{document}
\title{Bounded Manifold Completion}

\author{Kelum~Gajamannage and Randy~Paffenroth
\IEEEcompsocitemizethanks{\IEEEcompsocthanksitem K. Gajamannage is with the Department
of Mathematics and Statistics, Texas A\&M University--Corpus Christi, Corpus Christi, TX 78412.\protect\\
E-mail: kelum.gajamannage@tamucc.edu
\IEEEcompsocthanksitem R. Paffenroth is with the Department
of Mathematical Sciences, Department of Computer Science, and Data Science Program, Worcester Polytechnic Institute, Worcester, MA 01609.\protect\\
E-mail:  rcpaffenroth@wpi.edu}

\thanks{}}

\markboth{}
{Shell \MakeLowercase{\textit{et al.}}: Bare Advanced Demo of IEEEtran.cls for IEEE Computer Society Journals}

\IEEEtitleabstractindextext{
\begin{abstract}
Nonlinear dimensionality reduction or, equivalently, the approximation of high-dimensional data using a low-dimensional nonlinear manifold is an active area of research. In this paper, we will present a thematically different approach to detect the existence of a low-dimensional manifold of a given dimension that lies within a set of bounds derived from a given point cloud. A matrix representing the appropriately defined distances on a low-dimensional manifold is low-rank, and our method is based on current techniques for recovering a partially observed matrix from a small set of fully observed entries that can be implemented as a low-rank Matrix Completion (MC) problem. MC methods are currently used to solve challenging real-world problems, such as image inpainting and recommender systems, and we leverage extent efficient optimization techniques that use a nuclear norm convex relaxation as a surrogate for non-convex and discontinuous rank minimization. Our proposed method provides several advantages over current nonlinear dimensionality reduction techniques, with the two most important being theoretical guarantees on the detection of low-dimensional embeddings and robustness to non-uniformity in the sampling of the manifold. We validate the performance of this approach using both a theoretical analysis as well as synthetic and real-world benchmark datasets.
\end{abstract}

\begin{IEEEkeywords}
Manifold, low-rank matrix completion, positive semi-definite, truncated nuclear norm, Gramian.
\end{IEEEkeywords}}

\maketitle

\IEEEdisplaynontitleabstractindextext
\IEEEpeerreviewmaketitle

\ifCLASSOPTIONcompsoc
\IEEEraisesectionheading{\section{Introduction}\label{sec:introduction}}
\else

\section{Introduction}
\label{sec:introduction}
\fi

\IEEEPARstart{L}{ow}-rank Matrix Completion (MC), a technique for estimating unobserved or partially observed (i.e., bounded) entries of a matrix that satisfies a low-rank assumption has been prominent in many real-world applications such as image inpainting \cite{Komodakis2006,Rasmussen2005}, video denoising \cite{Ji2010}, and recommender systems \cite{Koren2008,Steck2010}. Since the data sampled from these applications are naturally high-dimensional and large volume, working with all the available data or acquiring all the information for each matrix entry is often not feasible. Thus, MC provides efficient tools to deal with partially observed or unobserved entries of a matrix and recover a low-rank estimate of the fully observed matrix. Herein, we propose a low-rank MC algorithm with an additional constraint which requires that the recovered matrix represents distances on a manifold. This novel method is theoretically justified based upon recent work in truncated nuclear norm schemes \cite{hu2012fast} and provides superior empirical performance in challenging test problems. 

To set the stage for the description of our approach, we observe that classic techniques, such as Principal Component Analysis (PCA) \cite{Hotelling1933} and Multi-Dimensional Scaling (MDS) \cite{lee2004nonlinear}, assume that the data lay close to a low-dimensional \emph{linear} manifold (or subspace). Advanced techniques, such as Isometric Mapping (ISOMAP) \cite{Tenenbaum2000}, assume that each point in a dataset lay on or near a non-linear manifold where the distances on the manifold provide a faithful \emph{geodesic} representations of the distances between the points in the ambient high-dimensional Euclidean space in which the points are embedded.  Such methods rely on tight theoretical connection between the inner product matrix of the input data, often called the Gramian matrix, and the matrix of \emph{squared} Euclidean Distances (EDs) between the points in the data-set \cite{lee2004nonlinear}.  One recent method that is somewhat different than the above, but superficially similar in flavor to our proposed technique is Maximum Variance Unfolding (MVU) \cite{weinberger2006unsupervised}, which attempts to maximize the variance of the data in the presence of distance constraints between some pairs of points.  However, as we will explain in detail in the sequel, our proposed method arises from a quite different theoretical perspective than MVU and extends the capabilities of current methods in several important directions.

With the above ideas in mind, the high-level idea of our proposed algorithm is actually quite simple.  Given a collection of points in high-dimensional space, a classic MDS algorithm assumes that the EDs between all the points are a faithful representation of "meaningful" distances between the points (an idea that will be made precise in the sequel).  ISOMAP relaxes that assumption by only using EDs between neighboring points and using a shortest path \emph{graph distance} as a proxy for a "meaningful" distance between far away points.  Our method relaxes this assumption even further.  Like ISOMAP, we assume that the EDs between neighboring points are "meaningful", but \emph{differing from ISOMAP, we do not assume we have any approximation for far away distances}. We only assume that there exist \emph{rough upper and lower bounds for the distances between far away points} (perhaps provided by a maximum curvature type argument).  In fact, there can be pairs of points (perhaps even many of them) for which nothing is known about their distances on the low-dimensional manifold.  Such upper and lower bounds, combined with a constraint that the recovered matrix is Positive Semi-Definite (PSD), allow us to apply a MC scheme for the recovery of the unobserved distances and thereby the parameterization of the underlying low-dimensional manifold.  Therefore, we name our method Bounded Manifold Completion (BMC). 

Recovery of an arbitrary matrix using a low-rank principle is an ill-posed problem since such an optimization problem is non-convex and discontinuous \cite{hu2012fast}. Recent research in MC \cite{wright2009robust,candes2011robust, paffenroth2013space}, suggests that the nuclear norm convex relaxation is an efficient alternative to rank. However, the major limitation of an optimization involving a nuclear norm is that it minimizes the sum of \emph{all the Singular Values (SVs)}, and therefore might not correctly approximate the distances on the manifold. Thus, similar to \cite{hu2012fast}, here we employ a \emph{truncated nuclear norm} (equivalently, the sum of the smallest SVs) to achieve an accurate approximation for the partially observed distance matrix. We formulate BMC as an optimization scheme with multiple constraints where we use an Alternating Direction Method of Multipliers (ADMM) \cite{boyd2010distributed}, to reformulate the original problem as a general separable collection of convex problems \cite{hu2012fast} with an additional constraint to enforce that the returned matrix is a distance matrix. Existing schemes utilizing nuclear norm heuristics such as SV thresholding \cite{Cai2010}, nuclear norm regularized least squares \cite{Toh2009}, and Robust Principal Component Analysis (RPCA)  \cite{wright2009robust} have shown noteworthy performance.   However, as we show in Fig.~\ref{fig:MCintro}, using a truncated nuclear norm for non-linear dimensionality reduction allows the detection of a low-dimensional embedding with \emph{an error that is $0$, to within machine-precision.} 

\begin{figure}[htp]
    	\centering  
        	\includegraphics[width=3.5in]{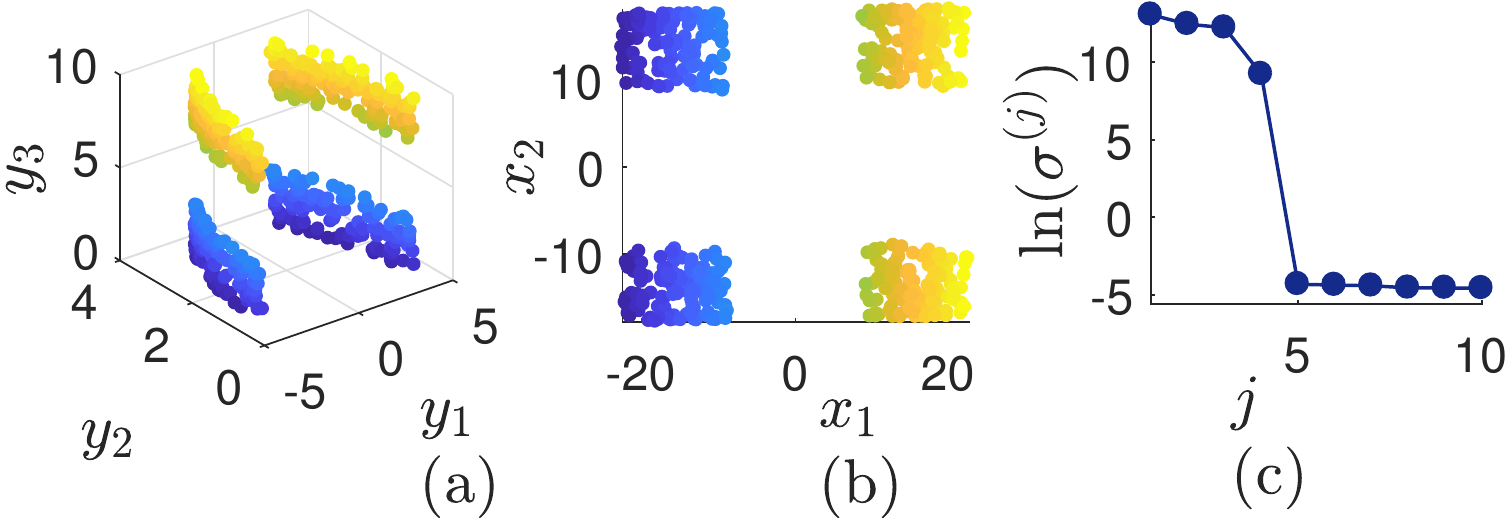}
        	\caption{Recovery of a non-uniformly sampled non-linear manifold using BMC. (a) A dataset of 500 points sampled \emph{non-uniformly} from a semi-cylinder having a hollow region and four isolated clusters. Here, the color spectrum shows the variation of manifold lengths along the $y^{(3)}$ axis. (b) 2-D embedding of the dataset performed using BMC where the color variation is associated with that of (a).  Note, the correct 2-D embedding is accurately recovered even though the non-linear manifold is sampled non-uniformly. (c) Semilogarithmic plot of the first 10 SVs ($\sigma^{(j)}$'s) of the recovered squared distance matrix.  For this data set, the theoretically optimal embedding, by construction, has 4 non-zero singular values.  Note that the fifth singular value is smaller than the fourth singular value by more than \emph{10 orders of magnitude} and is merely an artifact of machine precision.}
	\label{fig:MCintro}
\end{figure}

One of the key elements of our proposed MC scheme is the MC algorithm from \cite{paffenroth2013space} which is capable of recovering a matrix also having partially observed data. Specifically, the input distance matrix here is given in terms of two matrices, an upper bound matrix and a lower bound matrix.  In particular, the two corresponding values in the bound matrices are the same when the value at that location is fully observed and the two values in the bound matrices are not equal when the value of that location is partially observed. In particular, unobserved data can also be handled in this way by setting the lower and upper bounds to be arbitrarily small and large, respectively. This partially observed data technique is implemented in BMC which allows our method to leverage distances that are fully observed (the Manifold Distances (MDs) between two points are known), partially observed (the MDs between two points can only be bounded), and unobserved (the MDs between the points are not known). 

Referring back to Fig.~\ref{fig:MCintro}, we provide a simple example to illustrate the performance of BMC. We sample a dataset of 500 points $\bs{Y}=\{\bs{y}_1,\cdots,\bs{y}_{500} \}$ with $\bs{y}_i = \left(y^{(1)}_i, y^{(2)}_i, y^{(3)}_i\right) \in \mathbb{R}^{3}$ from a semi-cylinder such that the sample has some hollow parts of unknown size and four isolated clusters in the corners of the data [Fig.~\ref{fig:MCintro}(a)]. We construct a \emph{squared} ED matrix $D^e\in\mathbb{R}^{500 \times 500}_{\ge0}$ (real numbers greater than or equal to $0$) such that $D^e_{ij} = \|\bs{y}_i -\bs{y}_j\|^2_2$. We set the lower bound matrix ($D^l$) and the upper bound matrix ($D^u$) of the recovered \emph{squared} distance matrix to be $D^l_{ij}= 0.1 D^e_{ij}$ and $D^u_{ij}= 10 D^e_{ij}$.  \emph{Note, no actual MDs are required.} A semi-cylinder represents a 2-D manifold; thus, the rank of the squared distance matrix of this dataset, when properly embedded, is four (we present this as a Theorem in Sec.~\ref{sec:mc}). Then, BMC truncates the first four SVs, generates the recovered distance matrix $L$, and performs a 2-D embedding of the dataset in Fig.~\ref{fig:MCintro}(b) . We observe in the embedding that BMC is capable of projecting the data onto a 2-D plane such that the topology of the data is preserved even though the data is non-uniformly sampled. We compute the natural logarithm of the first 10 SVs of BMC's recovered squared distance matrix in Fig.~\ref{fig:MCintro}(d). We observe that the first four SVs of $L$ are non-zero while the rest of the SVs are (to within machine precision) zero and, therefore, BMC is capable of recovering the correct low-rank representation of the data.

All the notations and abbreviations used in the paper, along with their descriptions, appear in Table~\ref{tab:nomenclature}. This paper is structured as follows: In Sec.~\ref{sec:mwthods}, we will detail our proposed method in three subsections; first, we present the derivation of our BMC method along with some classic MC methods that inspired BMC; then, the numerical implementation of BMC is presented; finally, we present the ADMM solver of BMC. We validate the performance of BMC in Sec.~\ref{sec:per_analysis} using one synthetic dataset, a cylinder having a hollow region, and two real-life datasets, face images and handwritten digits. Here, we also present five classic DR methods that we will contrast/compare with BMC. Finally, we present discussions and conclusions in Sec.~\ref{sec:conclusion}. 

\subsection{Contributions}
Our proposed BMC approach makes the following contributions to the literature:
\begin{itemize}
\item BMC is a novel non-linear dimension reduction algorithm which leverages upper and lower bounds on MDs rather than parameters such as the presumed number of neighbors of a data point.
\item Even more, BMC is capable of operating when no information is available for many of the MDs.
\item BMC provides theoretical guarantees as to the performance of the algorithm, leveraging ideas from \cite{hu2012fast}.
\item BMC provides unique capabilities for detecting low-dimensional manifolds even when the manifold is non-uniformly sampled.
\item BMC is implemented using a fast ADMM solver allowing large problems to be treated with many data points.
\item The idea of using bounded low-rank matrix completion has applications outside of low-dimensional manifold detection where the input data is noisy or otherwise only roughly known. 
\end{itemize}

\begin{table*}[htp]
\caption {Nomenclature} \label{tab:nomenclature}
\begin{multicols}{2}
  
\begin{tabular}{p{3cm}|p{6.2cm}}
Notation &  Description \\
\hline\\
$\mathcal{S}$ &  Similarity metric \\
$\mathcal{M}_{ij}$ &  Manifold distance between points $i$ and $j$ \\
$\sigma^{(j)}$ & $j$-th singular value \\
$\lambda^{(j)}$ & $j$-th eigenvalue \\
$\lambda^{(j)}_m$ & $j$-th eigenvalue at the $m$-th iteration\\
$n$ & Number of points in the dataset \\
$U$ and $V$ & Unitary matrices \\
$D^e$ & Squared Euclidean distance matrix \\
$D^l$ & Lowe bound distance matrix \\
$D^u$ & Upper bound distance matrix \\
$\alpha^l$ & Scaling matrix associated with $D^l$ \\
$\alpha^u$ & Scaling matrix associated with $D^u$ \\
$L, K$ & Recovered distance matrices \\
$\mathrm{diag}(L)$ &  Vector formed from the diagonal of the matrix $L$ \\
$\mathrm{Diag}(\bs{v})$ &  Diagonal matrix formed with $\bs{v}$ as its diagonal \\
$\mathrm{Gram} (L)$ & Gramian of the matrix $L$ \\
$\mathrm{tr}(A)$ &  Trace of the matrix $A$ \\
$A\succeq 0$ & A is positive semi-definite \\
$E_{ij}=E(D^l_{ij}, D^u_{ij}, L_{ij})$ & Shrinkage operator of $L$ for bounds $D^l$ and $D^u$ \\
$\bs{y}_i=\left(y^{(1)}_i,\dots,y^{(d)}_i\right)^T$ & $i$-th point in the dataset of $d$-dimensions \\
$\bs{x}_i=\left(x^{(1)}_i,\dots,x^{(e)}_i\right)^T$ & $i$-th point in the $e$-dimensional embedding \\
$\zeta, \eta$ & Lagrangian multipliers \\ 
$\delta$ & Number of nearest neighbors \\
$m$ & index for the iteration \\
$T$ & total number of iterations \\
$\rho^{\zeta}_m, \rho^{\eta}_m$ & Lagrangian vectors at $m$-th iteration \\
$\|A\|_F$ & Frobenius norm of $A$ \\
$\langle A, B\rangle$ & Frobenius inner product of $A$ and $B$ \\
\end{tabular}

\begin{tabular}{p{.5cm} p{1.5cm}|p{5.5cm}}
& Notation & Description\\
\cline{2-3}
& & \\
& $\mathcal{D}_{\nu}$ & SV shrinkage operator \\
& $\bs{e}$ & Column vector of ones \\
& $r$ & Number largest SV to be truncated \\
& $I$ & Identity matrix \\
& $\mathcal{E}_c$ & Clustering error \\
& $\mathcal{E}_n$ & Neighborhood preserving error\\
\multicolumn{3}{c}{} \\

& Abbreviation & Description\\
\cline{2-3}
& & \\
& ADMM&Alternating direct method of multipliers\\
& ALF&Augmented Lagrangian function\\
& ALM&Augmented Lagrangian multipliers\\
& BMC& Positive semi-definite matrix completion\\
& DM&Diffusion maps\\
& DR&Dimensionality reduction\\
& ED&Euclidean distance\\
& EV&Eigenvalue\\
& EVD&Eigenvalue decomposition\\
& HLLE& Hessian local linear embedding\\ 
& KMC&k-means clustering\\
& LE& Laplacian eigenmaps\\
& LLE& Local linear embedding\\
& MC&Matrix completion\\
& MD&Manifold distance\\
& MDS&Multi-dimensional scaling\\
& NP&Non polynomial\\
& PSD&Positive semi-definite\\
& RPCA&Robust principal component analysis\\
& SV&Singular value\\
& SVD&Singular value decomposition\\
\end{tabular}
\end{multicols}
\end {table*}

\section{Methods}\label{sec:mwthods}
In this section, first, we present the details of the BMC scheme along with the details of extent MC algorithms that inspired the BMC scheme. Then, we present the numerical implementation of BMC. Finally, we derive an iterative ADMM solver for BMC. 

\subsection{Matrix Completion}\label{sec:mc}

In general, low-rank MC schemes take a feature matrix with unobserved entries and recovers the unobserved entries of that matrix by utilizing optimization techniques.  Using such MC techniques, we are interested in a low-rank recovery of distances on a manifold. 

We start the derivation of BMC by assuming that we are given an arbitrary squared distance matrix of size $n\times n$, $L\in\mathbb{R}^{n\times n}_{\ge0}$. For such a matrix $L$ we denote by
\begin{equation}\label{eqn:double_centering}
\mathrm{Gram}(L)_{ij}=-\frac{1}{2}\big[L_{ij}-\mu_i(L_{ij}) -\mu_j(L_{ij})+\mu_{ij}(L_{ij})\big],
\end{equation} 
the Gramian of $L$, where $\mu_i(L_{ij})$, $\mu_j(L_{ij})$, and $\mu_{ij}(L_{ij})$ are the means of the $i$-th row, $j$-th column, and the full matrix $L$, respectively, \cite{lee2004nonlinear}. Ref.~\cite{lee2004nonlinear} states that $L$ is a squared distance matrix if $\mathrm{Gram}(L)$ is PSD, as in Definition~\ref{def:psd}, leading to our use of the Gramian. We will impose this essential condition of $\mathrm{Gram}(L)$ is PSD, denoted by $Gram(L)\succeq 0$, as a constraint to BMC in addition to one general constraint in it, that we will discuss in the sequel.
\begin{definition}\label{def:evd}
Consider that $Diag(\lambda^{(1)}, \dots \lambda^{(n)})$ represents a diagonal matrix formed with the vector $(\lambda^{(1)}, \dots \lambda^{(n)})$ as its diagonal. Let $L\in\mathbb{R}^{n\times n}$ be a diagonalizable matrix (e.g. a distance matrix) then the Eigenvalue Decomposition (EVD) of $L$ is $L=U\Lambda U^{-1}$, where $\Lambda= $$Diag(\lambda^{(1)}, \dots \lambda^{(n)})$. Here, for $j=1 ,\dots, n$, $\lambda^{(j)}$ represents $j$-th eigenvalue of $L$.
\end{definition}
\begin{definition}\label{def:psd}
Let $\lambda^{(j)}$ is the $j$-th Eigenvalue (EV) of $\mathrm{Gram}(L)$, then the matrix $\mathrm{Gram}(L)\in\mathbb{R}^{n\times n}$ is PSD if  $\lambda^{(j)}\ge0$ for all $j=1, \dots,  n$. We denote PSD $\mathrm{Gram}(L)$ by $\mathrm{Gram}(L)\succeq 0$.
\end{definition}

As seen in Fig.~\ref{fig:MCintro}, in contrast to many other DR methods, both, 1) BMC is capable of recovering a single manifold even when the sampling of the manifold is non-uniform, 2) BMC allows many of these distances to be only \emph{partially observed}. BMC accomplishes this task by using Euclidean distances between the given points as similar to many other DR methods. Consider that the MD between two points $\bs{y}_i$ and $\bs{y}_j$ is denoted by $\mathcal{M}_{ij}$.  We then define the squared lower bound matrix, denoted by $D^l=[D^l_{ij}]_{n\times n}\in \mathbb{R}_{\ge0}^{n\times n}$, and the squared upper bound matrix, denoted by $D^u=[D^u_{ij}]_{n\times n}\in \mathbb{R}_{\ge0}^{n\times n}$,  as
\begin{equation}\label{eqn:lowerBound}
	D^l_{ij} = 
	\left\{
	\begin{array}{ll}
		\mathcal{M}^2_{ij}, & \text{if the MD between}\\
		 & \text{points} \ \bs{y}_i \ \text{and} \ \bs{y}_j \text{are observed,} \\
			  \alpha^l_{ij} \|\bs{y}_i-\bs{y}_j\|^2, & \mathrm{otherwise},      	
	\end{array} 
	\right.
\end{equation}
and
\begin{equation}\label{eqn:upperBound}
	D^u_{ij} = 
	\left\{
	\begin{array}{ll}
		\mathcal{M}^2_{ij}, &\text{if the MD between}\\
		 & \text{points} \ \bs{y}_i \ \text{and} \ \bs{y}_j \text{are observed,} \\
			  \alpha^u_{ij} \|\bs{y}_i-\bs{y}_j\|^2, & \mathrm{otherwise},
	\end{array} 
	\right.
\end{equation}
respectively. We say that $\mathcal{M}_{ij}$ is fully observed if $\mathcal{M}^2_{ij}=D^l_{ij}=D^u_{ij}$.  Otherwise, we set suitable values for $\alpha^l=[\alpha^l_{ij}]_{n\times n}$ and $\alpha^u=[\alpha^u_{ij}]_{n\times n}$, where $\alpha^l_{ij}\le\alpha^u_{ij}$ for all $ij$, which are user defined non-negative scaling matrices of the squared lower bound matrix and  the squared upper bound matrix, respectively. These parameters allow some MDs to only be bounded rather than known precisely. How might the values for $\alpha^u$ and $\alpha^l$ be set?  Formally speaking, the BMC algorithm admits any choice the user might make.  However, in practice, and as we will demonstrate in the sequel, choosing $\alpha^u$ and $\alpha^l$ based upon a bound on the \emph{curvature} of the desired manifold leads to advantageous properties.

In particular, the scaling matrices $\alpha^l$ and $\alpha^u$ can be used to define all the three scenarios of MDs of interest to the BMC algorithm, namely \emph{observed}, \emph{partially observed}, and \emph{unobserved}. Leveraging an observed distance is straight forward according to \eqref{eqn:lowerBound} and \eqref{eqn:upperBound}. If the MD between the points $\bs{y}_i$ and $\bs{y}_j$ is partially observed, we approximate finite values for $\alpha^l_{ij}$ and $\alpha^u_{ij}$ in \eqref{eqn:lowerBound} and \eqref{eqn:upperBound} such that those values closely bound the squared MD between the points $\bs{y}_i$ and $\bs{y}_j$. If the MD between the points $\bs{y}_i$ and $\bs{y}_j$ is unobserved, approximating the values for the parameters $\alpha^l_{ij}$ and $\alpha^u_{ij}$ are uncertain. Thus, we set $\alpha^l_{ij}=0$ and $\alpha^u_{ij}=\infty$ (or some arbitrarily large number).

BMC's objective function minimizes the truncated rank of the recovered matrix $L$, denoted by $Rank_r (L)$, defined as the number of nonzero values of the ($r+1$)-th tail of SVs $\{\sigma^{(r+1)},\dots,\sigma^{(n)}\}$ of $L$ where $r\ge0$ as in Definition~\ref{def:svd}.  I.e., $Rank_r (L)$ counts the number of non-zero SVs \emph{not counting the $r$ largest SVs}.  We are interested in minimizing the truncated rank, as suggested in \cite{hu2012fast}, rather than the rank since it allows the user to omit large SVs from the minimization process. Note, setting $r=0$ is the same as considering the rank of $L$ without a truncation. Thus, the optimization scheme of BMC that we use to recover a matrix $L$ by completing the incomplete matrix $\mathcal{M}^2$ is 
\begin{equation}\label{eqn:optRouting}
\begin{aligned}
& \underset{L}{\text{minimize}} & Rank_r (L), \\
& & D^l\le L \le D^u, \\
&& \mathrm{Gram} (L) \ \succeq 0.\\
\end{aligned}
\end{equation}

\begin{definition}\label{def:svd}
	Consider that $Diag(\sigma^{(1)}, \dots \sigma^{(\min(n,m))})$ represents a diagonal matrix formed with the vector $(\sigma^{(1)}, \dots, \sigma^{(\min(n,m))})$ as its diagonal. Let $L\in\mathbb{R}^{n\times m}$ is a matrix, and $U_{m\times m}$ and $V_{n\times n}$ are unitary matrices such that $U^TU=I$ and $V^TV=I$, respectively. Then, Singular Value Decomposition (SVD) of $L$ is $L=U\Sigma V^T$, where $\Sigma_{m\times n}= $$Diag(\sigma^{(1)}, \dots \sigma^{(\min(n,m))})$. Here, for $j=1 ,\dots, n$, $\sigma^{(j)}$ represents $j$-th singular value of $L$ and $\sigma^{(j)}\ge \sigma^{(j+1)}; \ \forall j$. 
\end{definition}

The optimization scheme in ~\eqref{eqn:optRouting} is inspired by classic theorems relating the rank of distance matrices to dimensions of manifolds.  In particular, for an $e$-D manifold, Theorem~\ref{thm:rank} states that the rank of the squared distance matrix $L$ is $e+1$ or $e+2$ depending on whether the dataset represents a hyper-sphere or a non-hyper-sphere, respectively. Thus, we set $r=e+2$ in \eqref{eqn:optRouting} for a dataset representing non-hyper-spherical $e$-D manifold which is the most common case of the datasets. 

\begin{theorem}\label{thm:rank}
~\cite{gower1985applications}, 
Let the dimensionality of a given manifold is $e$. Then, 
\begin{enumerate}[label=(\roman*)]
\item the rank of the squared distance matrix computed on the manifold is $e+1$ iff the manifold is a hyper-sphere, or
\item the rank of the squared distance matrix computed on the manifold is $e+2$ iff the manifold is not a hyper-sphere.
\end{enumerate}
\end{theorem}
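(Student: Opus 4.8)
The plan is to read the statement as a fact about Euclidean distance matrices. I regard the $e$-dimensional manifold as isometrically flattened into $\mathbb{R}^{e}$, so that the distances ``on the manifold'' are the Euclidean distances $\|\bs{x}_i-\bs{x}_j\|$ between points $\bs{x}_i\in\mathbb{R}^{e}$ whose affine hull is all of $\mathbb{R}^{e}$ (this is what ``dimensionality $e$'' means for the sample), and ``hyper-sphere'' means the $\bs{x}_i$ are cospherical in $\mathbb{R}^{e}$. Translating the centroid to the origin (which leaves $L$ unchanged) and stacking the coordinates as the rows of $X\in\mathbb{R}^{n\times e}$ with $\mathrm{rank}(X)=e$, I expand $\|\bs{x}_i-\bs{x}_j\|^{2}=\|\bs{x}_i\|^{2}-2\langle\bs{x}_i,\bs{x}_j\rangle+\|\bs{x}_j\|^{2}$ to obtain
\[
L=\bs{e}\,\bs{b}^{\,T}+\bs{b}\,\bs{e}^{\,T}-2\,XX^{T},
\qquad \bs{b}=(\|\bs{x}_1\|^{2},\dots,\|\bs{x}_n\|^{2})^{T}.
\]
Double-centering annihilates the two rank-one terms and returns $\mathrm{Gram}(L)=XX^{T}$, of rank $e$, so the theorem is really a statement about how much the two rank-one corrections lift the rank above $\mathrm{rank}\,\mathrm{Gram}(L)=e$.

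Next I would extract the rank from a single factorization. With $Z=[\,\bs{e}\ \ \bs{b}\ \ X\,]\in\mathbb{R}^{n\times(e+2)}$ and
\[
C=\begin{pmatrix} 0 & 1 & \bs{0}^{T}\\ 1 & 0 & \bs{0}^{T}\\ \bs{0} & \bs{0} & -2I_{e}\end{pmatrix},
\]
the decomposition reads $L=ZCZ^{T}$, where $C$ is invertible since $\det C=(-1)(-2)^{e}\neq 0$. Because centering forces $\bs{e}^{T}X=\bs{0}^{T}$, the column $\bs{e}$ is independent of the columns of $X$, so $\mathrm{span}(\bs{e},\mathrm{col}\,X)$ always has dimension $e+1$ and the only question is whether the extra column $\bs{b}$ adds a further dimension; hence $\mathrm{rank}(Z)\in\{e+1,e+2\}$. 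I would then establish $\mathrm{rank}(L)=\mathrm{rank}(Z)$ case by case.

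The heart of the argument is the characterization of the cospherical case. The condition $\bs{b}\in\mathrm{span}(\bs{e},\mathrm{col}\,X)$, i.e. $\bs{b}=c\,\bs{e}+Xw$ for some scalar $c$ and $w\in\mathbb{R}^{e}$, is equivalent to $\|\bs{x}_i\|^{2}=c+\langle\bs{x}_i,w\rangle$ for all $i$, i.e. to $\|\bs{x}_i-w/2\|^{2}=c+\|w\|^{2}/4$ for all $i$ --- exactly the statement that the $\bs{x}_i$ lie on a common hyper-sphere centered at $w/2$. In the non-cospherical case $Z$ has full column rank $e+2$, and since $C$ is invertible $\mathrm{rank}(L)=e+2$, which is (ii). In the cospherical case I substitute $\bs{b}=c\,\bs{e}+Xw$ back into the decomposition to get $L=\tilde{Z}\tilde{C}\tilde{Z}^{T}$ with $\tilde{Z}=[\,\bs{e}\ \ X\,]$ of full column rank $e+1$ and
\[
\tilde{C}=\begin{pmatrix} 2c & w^{T}\\ w & -2I_{e}\end{pmatrix},
\qquad
\det\tilde{C}=(-2)^{e}\bigl(2c+\tfrac12\|w\|^{2}\bigr)=2R^{2}(-2)^{e},
\]
where $R^{2}=c+\|w\|^{2}/4>0$ is the squared radius of the sphere; thus $\tilde{C}$ is invertible and $\mathrm{rank}(L)=e+1$, which is (i). Since the two cases are exhaustive and mutually exclusive, each biconditional follows immediately.

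The step I expect to be the main obstacle is the rank bookkeeping for $L=ZCZ^{T}$: unlike the PSD setting, $C$ is indefinite, so $\mathrm{rank}(ZCZ^{T})=\mathrm{rank}(Z)$ is not automatic and must be secured through the nondegeneracy of the form $C$ restricted to $\mathrm{col}\,Z$ --- which is precisely where the positivity of the radius $R^{2}$ enters in the cospherical branch. A secondary point needing explicit care is the genuineness hypotheses, namely that the sample affinely spans $\mathbb{R}^{e}$ and that $n\ge e+2$; without them the stated values are only upper bounds, and I would state them up front as the precise meaning of the data ``having dimensionality $e$.''
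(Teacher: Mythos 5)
Your derivation is correct, but note that the paper does not actually prove this theorem: its ``proof'' is a pointer to Theorem~6 of Gower (1985), so any self-contained argument is necessarily a different route from the paper's. What you have written is essentially a faithful reconstruction of Gower's classical argument: the bordered factorization $L=ZC Z^{T}$ with $Z=[\,\bs{e}\ \ \bs{b}\ \ X\,]$, the observation that centering makes $\bs{e}$ independent of the columns of $X$ so that only $\bs{b}$ can raise or fail to raise the rank, the identification of $\bs{b}\in\mathrm{span}(\bs{e},\mathrm{col}\,X)$ with cosphericity, and the determinant $\det\tilde{C}=2R^{2}(-2)^{e}$ certifying nondegeneracy in the spherical branch. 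The rank bookkeeping you single out as the main obstacle is in fact automatic in the form you need it: once $Z$ (resp.\ $\tilde{Z}$) has full column rank and $C$ (resp.\ $\tilde{C}$) is invertible, $Z^{T}$ is surjective and $Z$ is injective, so the image of $ZCZ^{T}$ is exactly $\mathrm{col}\,Z$ and $\mathrm{rank}(ZCZ^{T})=\mathrm{rank}(Z)$; your case split already places you in that situation, with the only genuine nondegeneracy check being $R^{2}>0$, which you supply. The hypotheses you flag --- $n\ge e+2$, the sample affinely spanning $\mathbb{R}^{e}$, and reading ``distance on the manifold'' as Euclidean distance of an isometrically flattened sample (so that ``hyper-sphere'' means cosphericity of that sample) --- are genuinely needed and are left implicit in the paper's statement, which conflates the manifold's intrinsic dimension with the dimension of the affine span; your reading is the one under which the theorem is true and under which the paper actually uses it (rank $4$ for a flattened $2$-D semi-cylinder). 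The citation buys the paper brevity; your derivation makes visible exactly where the two extra rank units beyond $\mathrm{rank}\,\mathrm{Gram}(L)=e$ come from and why cosphericity removes precisely one of them.
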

\begin{proof}
See Theorem~6 and the associated derivation in ~\cite{gower1985applications}.
\end{proof}

Computing $Rank_r(L)$ in \eqref{eqn:optRouting} is non-convex and NP-hard\footnote{A problem is NP-hard if an algorithm for solving it can be translated into an algorithm for solving any non-deterministic polynomial time problem.}.  However, as stated in \cite{hu2012fast}, the \emph{truncated nuclear norm}, see Definition.~\ref{def:trun}, provides a proxy for $Rank_r(L)$ that can be optimized efficiently. Thus, inspired by the approach in \cite{hu2012fast}, we approximate $Rank_r(L)$ in  \eqref{eqn:optRouting} by the $r$-th truncated nuclear norm of $L$, denoted by $\|L\|_{*,r}$. With the aforementioned modification, the optimization scheme of BMC is written as 
\begin{equation}\label{eqn:trunNormApprx}
\begin{aligned}
& \underset{L}{\text{minimize}} & \|L\|_{*,r}, \\
& & D^l\le L \le D^u, \\
&& \mathrm{Gram} (L) \ \succeq 0.\\
\end{aligned}
\end{equation}

\begin{definition}\label{def:nucl}
	For a given matrix $L\in\mathbb{R}^{n\times n}$, the nuclear norm of $L$, denoted by $\|L\|_*$, is defined as
	\begin{equation}\label{eqn:nuclrNorm}
	\|L\|_*=\sum^n_{j=1}\sigma^{(j)},
	\end{equation} 
	where $\sigma^{(j)}$ denotes $j$-th SV of $L$.
\end{definition}
	
\begin{definition}\label{def:trun}
	For a given matrix $L\in\mathbb{R}^{n\times n}$ and $r\in \mathbb{Z}_{\ge0}$, the $r$-th truncated nuclear norm of $L$, denoted by $\|L\|_{*,r}$, is defined as
	\begin{equation}\label{eqn:trunNorm}
	\|L\|_{*,r}=\sum^n_{j=r+1}\sigma^{(j)},
	\end{equation} 
	where $\sigma^{(j)}$ denotes the $j$-th SV of $L$.
\end{definition}
	
\subsection{Implementation}

Computation of $\|L\|_{*,r}$ in \eqref{eqn:trunNormApprx} is numerically implemented by  subtracting the sum of $r$-largest SVs from the sum of all the SVs ($\|L\|_*$ is the sum of all the SVs) as stated by Theorem~\ref{thm:trun}. Thus, with the aid of Theorem~\ref{thm:trun}, the modified optimization scheme is
\begin{equation}\label{eqn:convRelx}
\begin{aligned}
& \underset{L}{\text{minimize}} & \|L\|_* - \mathrm{tr}(U_rLV^T_r), \\
& & D^l\le L \le D^u, \\
&& \mathrm{Gram} (L) \ \succeq 0.\\
\end{aligned}
\end{equation}

\begin{theorem}\label{thm:trun} ~\cite{hu2012fast}.
	For a given matrix $L\in\mathbb{R}^{n\times n}$; let, $L=U\Sigma V^T$ provides SVD of $L$. Let, $U_r$ consists of the first $r$ columns of the matrix $U$ and $V^T_r$ consists of the first $r$ columns of $V^T$. Moreover, $tr(U_rLV^T_r)$ denotes the trace of $U_rLV^T_r$,  that is, sum of all the diagonal entries of $U_rLV^T_r$. Then, 
	\begin{equation}\label{eqn:huMC}
		\|L\|_{*,r} = \|L\|_* - \mathrm{tr}(U_rLV^T_r).
	\end{equation}
	Here, $\|L\|_*$ is the nuclear norm of $L$ as given by Definition~\ref{def:nucl}.
\end{theorem}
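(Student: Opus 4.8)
The plan is to reduce the stated identity to the single computation that the correction term equals the sum of the $r$ largest singular values, $\sum_{j=1}^{r}\sigma^{(j)}$. Granting this, the conclusion is immediate: Definition~\ref{def:nucl} gives $\|L\|_*=\sum_{j=1}^{n}\sigma^{(j)}$ and Definition~\ref{def:trun} gives $\|L\|_{*,r}=\sum_{j=r+1}^{n}\sigma^{(j)}$, so subtracting the leading $r$ terms from the full sum leaves exactly the tail, i.e.\ $\|L\|_*-\sum_{j=1}^{r}\sigma^{(j)}=\sum_{j=r+1}^{n}\sigma^{(j)}=\|L\|_{*,r}$.

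First I would fix notation cleanly: let $U_r,V_r\in\mathbb{R}^{n\times r}$ denote the matrices whose columns are the first $r$ columns of $U$ and of $V$, respectively, and read the correction term as $\mathrm{tr}(U_r^{T}LV_r)$; by the cyclic invariance of the trace this agrees with the quantity written in \eqref{eqn:convRelx} and \eqref{eqn:huMC}. Substituting the SVD $L=U\Sigma V^{T}$ from Definition~\ref{def:svd} and invoking the orthonormality relations $U^{T}U=I$ and $V^{T}V=I$, the products of singular-vector blocks act as coordinate projections: $U_r^{T}U=[\,I_r\mid 0\,]$ and $V^{T}V_r=[\,I_r\mid 0\,]^{T}$. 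Hence
\[
U_r^{T}LV_r=U_r^{T}U\,\Sigma\,V^{T}V_r=[\,I_r\mid 0\,]\,\Sigma\,[\,I_r\mid 0\,]^{T}=\mathrm{Diag}\!\left(\sigma^{(1)},\dots,\sigma^{(r)}\right),
\]
and taking the trace yields $\mathrm{tr}(U_r^{T}LV_r)=\sum_{j=1}^{r}\sigma^{(j)}$. The ordering convention $\sigma^{(j)}\ge\sigma^{(j+1)}$ from Definition~\ref{def:svd} guarantees that this leading block genuinely collects the $r$ \emph{largest} singular values, which is what the truncation in Definition~\ref{def:trun} removes.

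The computation itself is elementary linear algebra, so the only genuinely delicate point is the dimensional and transpose bookkeeping: one must arrange the factors so that $U_r^{T}U$ truncates $\Sigma$ from the left and $V^{T}V_r$ truncates it from the right, leaving precisely its leading $r\times r$ principal block. I would emphasize that no inequality or optimization is needed here, because the theorem fixes $U_r$ and $V_r$ to be the SVD factors of $L$ itself rather than letting them range over all orthonormal frames; thus the claim is a direct algebraic identity and does not require a von Neumann--type trace inequality. I would finish by assembling the subtraction displayed in the first paragraph to recover $\|L\|_{*,r}=\|L\|_*-\mathrm{tr}(U_r^{T}LV_r)$.
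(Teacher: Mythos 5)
Your argument is correct. Note, though, that the paper does not actually prove Theorem~\ref{thm:trun}: it simply defers to Theorem~3.1 of \cite{hu2012fast}. The result proved there is the stronger variational statement
\[
\max_{BB^{T}=I,\;CC^{T}=I}\;\mathrm{tr}\bigl(BLC^{T}\bigr)=\sum_{j=1}^{r}\sigma^{(j)},
\]
whose proof requires a von Neumann--type trace inequality to show that \emph{no} pair of orthonormal frames can beat the singular-vector frames; the identity \eqref{eqn:huMC} is then the special case where the maximum is attained. You correctly observe that the theorem as stated here fixes $U_r$ and $V_r$ to be the SVD factors of $L$ itself, so the inequality half of that argument is unnecessary and the claim collapses to the two-line computation $U_r^{T}LV_r=[\,I_r\mid 0\,]\Sigma[\,I_r\mid 0\,]^{T}=\mathrm{Diag}(\sigma^{(1)},\dots,\sigma^{(r)})$. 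That makes your proof more elementary and self-contained than the cited one, at the cost of not establishing the variational characterization that \cite{hu2012fast} actually exploits algorithmically (the ADMM step in \eqref{eqn:solForL_old} implicitly relies on $U_r$ and $V_r$ being the \emph{maximizers}, which is why the full trace inequality is needed there). Your silent repair of the paper's dimensional typo --- reading $\mathrm{tr}(U_rLV_r^{T})$ as $\mathrm{tr}(U_r^{T}LV_r)$ with $U_r,V_r\in\mathbb{R}^{n\times r}$ --- is the right reading and worth stating explicitly, as the expression as printed does not conform.
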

	
\begin{proof}
See Theorem~3.1 and the following derivation in ~\cite{hu2012fast}.
\end{proof}

To optimize \eqref{eqn:convRelx} we utilize an ADMM \cite{boyd2010distributed}.  In particular, as inspired by \cite{hu2012fast} we split the original decision variable $L$ in the objective function of \eqref{eqn:convRelx} into two decision variables, say $L$ and $K$, such that $L=K$. Without loss of generality, we replace $L$ by $K$ in the second term of the objective function and rewrite the optimization scheme in \eqref{eqn:convRelx} as 
\begin{equation}\label{eqn:optRouting1}
\begin{aligned}
& \underset{L,K}{\text{minimize}} & \|L\|_* - \mathrm{tr}(U_rKV^T_r) \\
& & L=K,\\
& & D^l\le K \le D^u, \\
&& \mathrm{Gram}(L) \ \succeq 0.\\
\end{aligned}
\end{equation}
This transformation allows a closed form solution in the case where we fix $L$ and optimize \eqref{eqn:optRouting1} with respect to $K$, or in the case where we fix $K$ and optimize \eqref{eqn:optRouting1} with respect to $L$, and leads to an efficient ADMM solver.

BMC's technique of leveraging the inequality constraint $D^l\le K \le D^u$ in \eqref{eqn:optRouting1} is influenced by \cite{paffenroth2013space}, where a new function is introduced to imply this inequality. In BMC, we denote this new function as $E=[E_{ij}]_{n\times n}\in \mathbb{R}^{n\times n}$ and model it such that if a recovered entry $K_{ij}$ satisfies the inequality constraint, then $E_{ij}=0$, otherwise  $E_{ij}$ quantifies the violation of the inequality constraint. Thus, 
\begin{equation}
E_{ij}(D^l_{ij}, D^u_{ij}, K_{ij})=
\left\{
\begin{array}{ll}
	K_{ij}-D^l_{ij}, & \mathrm{if} \ D^l_{ij}>K_{ij}, \\
      	K_{ij}-D^u_{ij}, & \mathrm{if} \ D^u_{ij}<K_{ij}, \\     
      	0, & \mathrm{otherwise}, \\
\end{array} 
\right.\\
\end{equation}
which is shown in Fig.~\ref{fig:funE} and is often known as a shrinkage operator in the literature. Note that, for given $D^l_{ij}$ and $D^u_{ij}$, $E_{ij}$ is a function of $K_{ij}$. Finally, to satisfy the inequality constraint in \eqref{eqn:optRouting1}, we add the constraint $E=[0]_{n\times n}$ into the optimization scheme. With the aforesaid modifications, our optimization scheme is
\begin{equation}\label{eqn:optRouting2}
\begin{aligned}
& &\underset{L, K}{\text{minimize}} \ \ \ \|L\|_* - \mathrm{tr}(U_rKV^T_r) \\
& & L=K,\\
& & \mathrm{Gram}(L) \ \succeq 0,\\
& & E=[0]_{n\times n}, \\
& & \text{where} \ \  E_{ij}=
\left\{
\begin{array}{ll}
	K_{ij}-D^l_{ij}, & \mathrm{if} \ D^l_{ij}>K_{ij}, \\
      	K_{ij}-D^u_{ij}, & \mathrm{if} \ D^u_{ij}<K_{ij}, \\     
      	0, & \mathrm{otherwise}. \\
\end{array} 
\right.\\
\end{aligned}
\end{equation}

\begin{figure}[htp]
	\begin{center}
	\includegraphics[width=2.5in]{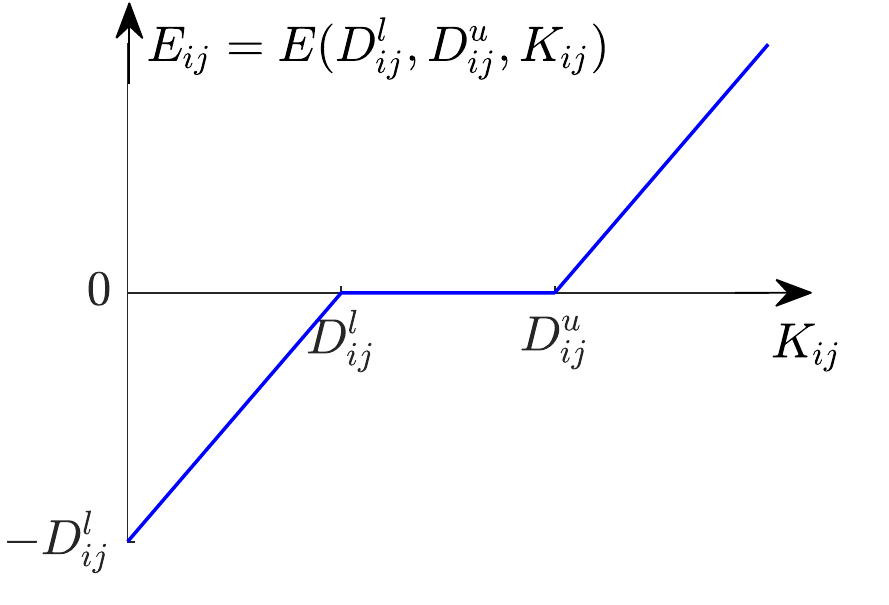}
	\end{center}
	\caption{Function $E_{ij}$ versus $K_{ij}$ for arbitrary upper bound, $D^l_{ij}$, and lower bound, $D^u_{ij}$. $E_{ij}$ is zero if $K_{ij}$ is between the bounds, but it increases with the same rate as $K_{ij}$ if it is outside of the bounds.}
	\label{fig:funE}
\end{figure}

In Sec.~\ref{sec:admm}, we solve the optimization scheme in \eqref{eqn:optRouting2} using an ADMM.

\subsection{ADMM solver}\label{sec:admm}
An ADMM classically uses Augmented Lagrangian Function (ALF) to transform a constrained optimization problem into a related unconstrained optimization problem. Thus, inspired by \cite{boyd2010distributed}, first we formulate the ALF of the optimization scheme in \eqref{eqn:optRouting2} and then we follow an ADMM scheme to minimize the ALF. Let the matrices $\zeta\in \mathbb{R}^{n\times n}$ and $\eta\in \mathbb{R}^{n\times n}$ denote Lagrangian multipliers, $\|\cdot\|_F$ denotes the Frobenius norm as in Definition~\ref{def:frob}, and $\langle \cdot,\cdot\rangle$ denotes the inner product as in Definition~\ref{def:inner_product}.  The ALF of the optimization scheme in \eqref{eqn:optRouting2} is formed by combining all the constraints into the Lagrangian $\mathcal{L}$ as
 \begin{equation}\label{eqn:ALM}
\begin{aligned}
&\mathcal{L}(L,K,\zeta,\eta,\rho^{\zeta},\rho^{\eta}\vert \mathrm{Gram}(L) \succeq 0) \\
&= \|L\|_* - \mathrm{tr}(U_r K V^T_r) + \langle \zeta,L-K \rangle+\frac{\rho^{\zeta}}{2}\|L-K\|^2_F \\
&\hspace{5cm}+ \langle\eta,E\rangle+\frac{\rho^{\eta}}{2}\|E\|^2_F,\\
\end{aligned}
\end{equation}
where the scalars $\rho^{\zeta}>0$ and $\rho^{\eta}>0$ are penalty parameters. Thus,  the minimization of $\mathcal{L}$ is equivalent to the optimization of \eqref{eqn:optRouting2}.

\begin{definition}\label{def:frob}
	The Frobenius norm of a matrix $A=[a_{ij}]_{n\times n}\in\mathbb{R}^{n\times n}$ is defined as
	\begin{equation}
	\|A\|_F=\sqrt{\sum_{i,j=1}^{n}|a_{ij}|^2}.
	\end{equation} 
\end{definition}

\begin{definition}\label{def:inner_product}
For given matrices $A=[a_{ij}]_{n\times n} \in\mathbb{R}^{n\times n}$ and $B=[b_{ij}]_{n\times n} \in\mathbb{R}^{n\times n}$, the Frobenius inner product of them is denoted by $\langle A,B\rangle$ and is defined as
\begin{equation}\label{eqn:nuclrNorm}
\langle A,B\rangle=\sum_{1\le i,j \le n}a_{ij}b_{ij}.
\end{equation} 
\end{definition}
	
Small values for the penalty parameters in minimization of $\mathcal{L}$ emphasize the minimization of the objective function while large values for the parameters emphasize the feasibility (satisfying constraints).  Our ADMM allows a closed-form approximation to minimization of $\mathcal{L}$ by first minimizing $\mathcal{L}$ with respect to one decision variable or one Lagrangian multiplier while keeping all the other variables and multipliers fixed. Then, ADMM uses the current solution to partially update the other variables and multipliers before their minimization and ADMM generates an iterative set of solutions for updating both the decision variables of the optimum objective function and the Lagrangian multipliers that enforce constraints. 

We begin by minimizing $\mathcal{L}$ in \eqref{eqn:ALM} with respect to the variables $L$, $K$, $\zeta$, and $\eta$ in Sec.~\ref{sec:min_ALF_with_L}. 

\subsubsection{Minimization of ALF with respect to $L$}\label{sec:min_ALF_with_L}
Let $T$ denotes the total number of iterations.  We formulate the minimization of the Lagrangian $\mathcal{L}$ for a general iteration, say $m$, where $1\le m\le T$. The solution  to the minimization of $\mathcal{L}$ in \eqref{eqn:ALM} with respect to $L$ at the $m$-th iteration is considered to be $L_{m+1}$. That is,
\begin{equation}
\begin{aligned}\label{eqn:solForK_old}
L_{m+1} &= \underset{L}{\argmin} \ \mathcal{L}(L,K_m,\zeta_m,\eta_m,\rho^{\zeta}_m,\rho^{\eta}_m\vert \mathrm{Gram}(L)\succeq 0)\\
& = \underset{L\vert \mathrm{Gram}(L)\succeq 0}{\argmin}
\left(
\begin{array}{c}
	\|L\|_*+\langle \zeta_m,L-K_m \rangle\\
	+\frac{\rho^{\zeta}_m}{2}\|L-K_m\|^2_F\\
\end{array} 
\right),\\
\end{aligned}
\end{equation}
where $1\le m\le T$. Note that, we ignored the terms independent of $L$ in \eqref{eqn:solForK_old}. We use both, 1) the relationship $\|A\|^2_F=\langle A, A \rangle$, where $A$ is an arbitrary matrix, between the Frobenius norm and the inner product, and 2) some identities of the inner product, to simplify \eqref{eqn:solForK_old} as\begin{equation} \label{eqn:solForK}
L_{m+1} = \underset{L\vert \mathrm{Gram}(L)\succeq 0}{\argmin}\left[\|L\|_*+\frac{\rho^{\zeta}_m}{2}\left\|L-\left(K_m-\frac{1}{\rho^{\zeta}_m}\zeta_m\right)\right\|^2_F\right].
\end{equation}

Now, we provide the Definition~\ref{def:shrink}  \cite{hu2012fast, Cai2010} of a shrinkage operator that shrinks SVs of a matrix $G$ by the value of $\nu \ge 0$ as we are going to use this operator in Theorem~\ref{trm:shrink}. In general, $\nu$ is a small value comparative to the largest SVs. Such a shrinkage operator eliminates small SVs representing non-prominent features of the dataset and retains only the prominent features. 
\begin{definition}\label{def:shrink}\cite{hu2012fast, Cai2010},
Let $G\in\mathbb{R}^{n\times n}$ and SVD of $G$ is $G=U\Sigma V^T$ where $\Sigma=Diag(\{\sigma^{(j)}\}_{1\le j\le n})=Diag(\sigma^{(1)}, \dots, \sigma^{(n)})$, see Definition~\ref{def:svd} for SVD.  For $\nu\ge 0$, we define SV shrinkage operator $\mathcal{D}_{\nu}$ of the matrix  $G$ as
\begin{equation}\label{eqn:shrink}
\begin{aligned}
\mathcal{D}_{\nu}(G)&=U\mathcal{F}(\Sigma)V^T, \text{where}\\
\mathcal{F}(\Sigma)&=\mathrm{Diag}\left(\{\max(\sigma^{(j)}-\nu,0)\}_{1\le j\le n}\right).\\
\end{aligned}
\end{equation} 
\end{definition}

We use Theorem~\ref{trm:shrink} \cite{hu2012fast, Cai2010}, to get a closed-form solution for \eqref{eqn:solForK}. Note that, $G$ and $\nu$ in this theorem are equivalent to $K_m-(1/\rho^{\zeta}_m)\zeta_m$ and $1/\rho^{\zeta}_m$ in \eqref{eqn:solForK}, respectively. This theorem states an alternative implementation of a vital minimization problem using shrinkage of SVs. When $\mathcal{L}$ in \eqref{eqn:ALM} convergences to the true solution, the conditions $L=K$ and $\zeta=0$ are satisfied. Thus, imposing $\mathrm{Gram}(G)\succeq 0$ is an approximation for imposing $\mathrm{Gram}(L)\succeq 0$. Theorem~\ref{trm:shrink} along with this approximation provides
\begin{equation}\label{eqn:solForK1}
\begin{aligned}
L_{m+1} = \{\mathcal{D}_{\nu}(G_m) \vert \mathrm{Gram}(G_m)\succeq 0\}; \\
G_m = K_m-\frac{1}{\rho^{\zeta}_m}\zeta_m \ \text{and} \ \nu = \frac{1}{\rho^{\zeta}_m}.\\
\end{aligned}
\end{equation}
\begin{theorem} \label{trm:shrink} \cite{hu2012fast, Cai2010},
For any $\nu\ge0$ and $G\in \mathbb{R}^{n\times n}$,  
\begin{equation}\label{eqn:shrinkage}
\mathcal{D}_{\nu}(G)=\underset{L}{\argmin} \ \left[\nu\|L\|_*+\frac{1}{2}\|L-G\|^2_F\right].
\end{equation} 
Note that, $\|\cdot\|_*$ and $\|\cdot\|_F$ are defined in Definitions~\ref{def:nucl} and \ref{def:frob}, respectively.
\end{theorem}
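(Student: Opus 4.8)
The plan is to exploit that the objective $f(L)=\nu\|L\|_* + \tfrac{1}{2}\|L-G\|_F^2$ is strictly convex, so it admits a unique minimizer, and then to verify that $\mathcal{D}_{\nu}(G)$ satisfies the first-order optimality (subgradient) condition. Strict convexity holds because the Frobenius term is strongly convex while the nuclear norm is convex; hence it suffices to show $0\in\partial f(\hat{L})$ at the candidate $\hat{L}=\mathcal{D}_{\nu}(G)$. Since $L\mapsto\tfrac{1}{2}\|L-G\|_F^2$ is differentiable with gradient $L-G$, this condition reads
\[
G-\hat{L}\in\nu\,\partial\|\hat{L}\|_*,
\]
so the whole proof reduces to understanding the subdifferential of the nuclear norm at $\hat{L}$.

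The key ingredient is the standard characterization of $\partial\|\cdot\|_*$: if $X$ has a compact SVD $X=U_0\Sigma_0 V_0^T$ with all diagonal entries of $\Sigma_0$ strictly positive, then
\[
\partial\|X\|_*=\left\{U_0V_0^T+W : U_0^TW=0,\ WV_0=0,\ \|W\|_2\le 1\right\},
\]
where $\|\cdot\|_2$ denotes the spectral (largest-singular-value) norm. With this in hand, I would write the SVD of $G$ as $G=U\Sigma V^T$ and split it according to the threshold $\nu$: let $U_0,V_0,\Sigma_0$ collect the triples with $\sigma^{(j)}>\nu$ and $U_1,V_1,\Sigma_1$ the triples with $\sigma^{(j)}\le\nu$. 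By Definition~\ref{def:shrink}, $\hat{L}=\mathcal{D}_{\nu}(G)=U_0(\Sigma_0-\nu I)V_0^T$, which is already a compact SVD of $\hat{L}$ since its singular values $\sigma^{(j)}-\nu$ are positive. A direct computation then gives
\[
G-\hat{L}=\nu\,U_0V_0^T + U_1\Sigma_1 V_1^T=\nu\left(U_0V_0^T+\tfrac{1}{\nu}U_1\Sigma_1 V_1^T\right),
\]
so I would set $W=\tfrac{1}{\nu}U_1\Sigma_1 V_1^T$ and verify the three membership conditions: the orthogonalities $U_0^TW=0$ and $WV_0=0$ follow from $U_0^TU_1=0$ and $V_1^TV_0=0$, while $\|W\|_2=\tfrac{1}{\nu}\|\Sigma_1\|_2\le 1$ because every entry of $\Sigma_1$ is at most $\nu$. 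This establishes $G-\hat{L}\in\nu\,\partial\|\hat{L}\|_*$ and, combined with uniqueness from strict convexity, closes the argument.

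The main obstacle is establishing (or cleanly invoking) the subdifferential formula for the nuclear norm; this is the one nontrivial fact and is where most of the technical work lies, since it rests on the duality between the nuclear norm and the spectral norm together with a careful treatment of the null-space block $W$. The boundary case $\nu=0$ should be handled separately, since $W=\tfrac{1}{\nu}U_1\Sigma_1 V_1^T$ is then undefined; there the statement degenerates to $\mathcal{D}_0(G)=G$ minimizing $\tfrac{1}{2}\|L-G\|_F^2$, which is immediate. One should also note the minor degeneracy when $\nu$ coincides with some singular value exactly, though this only forces $\|W\|_2=1$ and hence does not violate the required inequality $\|W\|_2\le 1$.
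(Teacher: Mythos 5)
Your argument is correct and is precisely the standard proof of Theorem~2.1 in Cai, Cand\`es, and Shen \cite{Cai2010} --- strict convexity for uniqueness, the subdifferential characterization of the nuclear norm, and the explicit witness $W=\tfrac{1}{\nu}U_1\Sigma_1V_1^T$ --- which is exactly the source the paper defers to in lieu of giving a proof. Your handling of the $\nu=0$ and boundary singular-value cases is a welcome extra bit of care.
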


\begin{proof}
See the proof of Theorem~2.1 in ~\cite{Cai2010}.
\end{proof}

The shrinkage operator in Definition~\ref{def:shrink} shrinks SVs of a matrix $G$, similar to $G_m$ in \eqref{eqn:solForK1}, by the value of $\nu \ge 0$ . These shrunk SVs along with unitary matrices, say $U_m$ and $V_m$, from the SVD are used to compute $\mathcal{D}_{\nu}(G_m)$ of \eqref{eqn:solForK1}. To incorporate the condition $\mathrm{Gram}(G_m)\succeq 0$ of \eqref{eqn:solForK1} we perform a modification to this definition under two steps 1) we compute the EVD of $\mathrm{Gram}(G_m)$ instead of the SVD of $G_m$ and 2) we assign zeros for the EVs that are either negative or less than some value $\nu$. 

Step 1: we transform $G_m$ to its Gramian using the matrix version of \eqref{eqn:double_centering} given in \eqref{eqn:solForK2}. Let, $\rm{diag}(G_m)$ denotes a vector formed by the diagonal of $G_m$ and $\bs{e}=[1, \dots, 1]^T$ is a column vector of ones. Note that, $\tilde{G}_m=G_m-\mathrm{Diag}(\mathrm{diag}(G_m))$ represents a matrix formed by replacing the diagonal of $G_m$ with zeros. Thus, \cite{gower1985applications} states that
\begin{equation}\label{eqn:solForK2}
\mathrm{Gram}(G_m)=-\frac{1}{2}\left(I-\frac{1}{n}\bs{e}\bs{e}^T\right)\tilde{G}_m\left(I-\frac{1}{n}\bs{e}\bs{e}^T\right).
\end{equation}
We compute EVD of $\mathrm{Gram}(G_m)$ using Definition~\ref{def:evd} such that
\begin{equation}\label{eqn:unitrary_EVD}
\mathrm{Gram}(G_m) = U_m \Lambda_m U^{-1}_m; \ \Lambda_m = \mathrm{Diag}\left(\left\{\lambda^{(j)}_m\right\}_{1\le j \le n}\right),
\end{equation}
where $\lambda^{(j)}_m$ represents the $j$-th eigenvalue of $\mathrm{Gram}(G_m)$ at the $m$-th iteration. 

Step 2: to impose $\mathrm{Gram}(G_m)\succeq 0$, we define a new shrinkage operator $S_{\nu}$. This shrinkage operator guarantees that the shrunk EVs of $\mathrm{Gram}(G_m)$ are non-negative which will assure $\mathrm{Gram}(G_m)\succeq 0$. We are also interested in eliminating EVs that are positive and small since they represent non-prominent features of the dataset including noise. For that, we adjust the shrinkage function $\mathcal{F}$ \eqref{eqn:shrink} to impose that shrunk EVs are not less than $1/\rho^{\zeta}_m$. Thus, the new shrinkage function, denoted by $\mathcal{F}^*$, is 
\begin{equation}\label{eqn:Newshrinkage}
\begin{aligned}
\mathcal{F}^*(\Lambda_m)&=\mathrm{Diag}\left(\{\lambda^{*(j)}_m\}_{1\le j\le n}\right);\\
\lambda^{*(j)}_m &= 
\left\{
\begin{array}{ll}
      \lambda^{(j)}_m-\nu^*, & \mathrm{if} \ \lambda^{(j)}_m-\nu^*>0, \\     
      0, & \mathrm{otherwise}, \\
\end{array} 
\right.
\end{aligned}
\end{equation}
where $\nu^*=(1/\rho^{\zeta}_m)-\min(\lambda^{(1)}_m, \dots, \lambda^{(n)}_m, 0)$. We use these shrunk EVs of $\mathrm{Gram}(G_m)$ to formulate the shrunk Gramian, denoted by $S_{\nu}(\mathrm{Gram}(G_m))$, which is PSD. For that, we use the matrix $U_m$ in \eqref{eqn:unitrary_EVD} as
\begin{equation}\label{eqn:psdgram}
S_{\nu}(\mathrm{Gram}(G_m))=U_m \mathcal{F}^*(\Lambda_m)U^{-1}_m.
\end{equation}
Now, we transform $S_{\nu}(\mathrm{Gram}(G_m))$ back to its corresponding $G_m$, denoted by $G^*_m$, using the reverse formula of \eqref{eqn:solForK1} as
\begin{equation}\label{eqn:psdG}
\begin{aligned}
G^*_m=\mathrm{Diag}(S_{\nu}(\mathrm{Gram}(G_m)))\bs{e}^T\\
+\bs{e}[\mathrm{Diag}(S_{\nu}(\mathrm{Gram}(G_m)))]^T\\
-2S_{\nu}(\mathrm{Gram}(G_m)),\\
\end{aligned}
\end{equation}
as in \cite{gower1985applications}.

The above two steps force the recovered matrix to be a distance matrix and retain the EVs representing prominent features of the dataset. Finally, the solution for $L$ of \eqref{eqn:solForK} is given as
\begin{equation}
L_{m+1}=G^*_m.
\end{equation}

\subsubsection{Minimization of ALF with respect to $K$}\label{sec:min_ALF_with_K}
Here, we minimize $\mathcal{L}$ in \eqref{eqn:ALM} with respect to $K$ at the $m$-th time step and obtain a new solution for $K$, denoted by $K_{m+1}$. Since we have minimized this Lagrangian with respect to $L$ in Sec.~\ref{sec:min_ALF_with_L}, we use the current solution of $L$, i.e. $L_{m+1}$, and partially update the Lagrangian. Thus,
\begin{equation}
\begin{aligned}\label{eqn:solForL_old}
K_{m+1} &= \underset{K}{\text{arg min}} \ \mathcal{L}(L_{m+1},K,\zeta_m,\eta_m,\rho^{\zeta}_m,\rho^{\eta}_m\vert \mathrm{Gram}(L)\succeq 0)\\
& = \underset{K}{\text{arg min}}\left[-\mathrm{tr}(U_r K V^T_r)+\langle \zeta_m,L_{m+1}-K \rangle\right.\\
& \hspace{.4cm}\left.+ \frac{\rho^{\zeta}_m}{2}\|L_{m+1}-K\|^2_F+ \langle\eta_m,E_m\rangle+\frac{\rho^{\eta}_m}{2}\|E_m\|^2_F\right].
\end{aligned}
\end{equation}
Again, we ignored the terms independent of $K$ in \eqref{eqn:solForL_old}. Using the relationship $\|A\|^2_F=\langle A, A \rangle$, where $A$ is an arbitrary matrix, and using some identities of the inner product, \eqref{eqn:solForL_old} is simplified as
\begin{equation}
\begin{aligned}\label{eqn:solForL}
K_{m+1} &= \underset{K}{\text{arg min}}\left[\frac{\rho^{\zeta}_m}{2}\left\|(L_{m+1}-K)+\frac{1}{\rho^{\zeta}_m}(U^T_r V_r+\zeta_m)\right\|^2_F\right.\\
&  \hspace{4.6cm}\left.+ \frac{\rho^{\eta}_m}{2}\left\|E_m+\frac{\eta_m}{\rho^{\eta}_m}\right\|^2_F\right].
\end{aligned}
\end{equation}

We differentiate the function inside arg min in \eqref{eqn:solForL} with respect to $K$ and solve it for $K$ as
\begin{equation}\label{eqn:solForL1}
K_{m+1}=\frac{1}{\rho^{\zeta}_m}\left[\rho^{\zeta}_m L_{m+1}+\left(U^T_r V_r+\zeta_m\right)
-(\rho^{\eta}_m E+\eta_m)E'\right].
\end{equation}
Note that, $K$ of the function $E=E(D^l, D^u, K)$ varies through iterations (e.g., as in Fig.~\ref{fig:funE} and \eqref{eqn:optRouting2}) whereas $D^l$ and $D^u$ are fixed through iterations, for a given problem. Thus, the derivative of the function $E$ in \eqref{eqn:optRouting2}, denoted by $E'=\left[E'_{ij}\right]_{n\times n}$, is 
\begin{equation}
E'_{ij} = 
\left\{
\begin{array}{ll}
	1, & \mathrm{if} \ D^l_{ij}>K_{ij}>D^u_{ij}, \\      	
      	0, & \mathrm{otherwise}. \\
\end{array} 
\right.
\end{equation}

\subsubsection{Minimization of ALF with respect to $\zeta$}
Now, we partially update $\mathcal{L}$ in \eqref{eqn:ALM} with the current solutions $L_{m+1}$ and $K_{m+1}$, and minimize $\mathcal{L}$ with respect to $\zeta$ to obtain its new solution $\zeta_{m+1}$ as
\begin{equation}
\begin{aligned}\label{eqn:solForLambda}
\zeta_{m+1} &= \underset{\zeta}{\text{arg min}} \ \mathcal{L}\left(
\begin{array}{c}
	K_{m+1},L_{m+1},\zeta,\eta_m,\rho^{\zeta}_m,\rho^{\eta}_m\\
	\vert \ \  \mathrm{Gram}(L)\succeq 0\\
\end{array} 
\right),\\
& = \underset{\zeta}{\text{arg min}}\left[\langle \zeta,L_{m+1}-K_{m+1} \rangle\right].\\
\end{aligned}
\end{equation}
Consider that we use the symbol $|$ for ``such that" in \eqref{eqn:solForLambda} and thereafter. Differentiation of \eqref{eqn:solForLambda} with respect to $\zeta$ gives 
\begin{equation}
\zeta_{m+1}=\zeta_m+\rho^{\zeta}_m (L_{m+1}-K_{m+1}).
\end{equation}

\subsubsection{Minimization of ALF with respect to $\eta$}
After partially updating $\mathcal{L}$ in \eqref{eqn:ALM} with the current solutions $L_{m+1}$, $K_{m+1}$, and $\zeta_{m+1}$, we obtain
\begin{equation}
\begin{aligned}\label{eqn:solForGamma}
\eta_{m+1} &= \underset{\eta}{\text{arg min}} \ \mathcal{L}\left(
\begin{array}{c}
	K_{m+1},L_{m+1},\zeta_{m+1},\eta,\rho^{\zeta}_m,\rho^{\eta}_m\\
	\vert \ \  \mathrm{Gram}(L)\succeq 0\\
\end{array}
\right),\\
& = \underset{\eta}{\text{arg min}}\left[\langle \eta,E\rangle\right].\\
\end{aligned}
\end{equation}
Differentiation of \eqref{eqn:solForGamma} with respect to $\eta$ gives 
\begin{equation}
\eta_{m+1}=\eta_m+\rho^{\eta}_m E_{m+1}.
\end{equation}

We observed in the testing process that BMC converges fast and is numerically stable when the ADMM solver emphasizes the minimization of the objective function in the early iterations and then it emphasizes on satisfying the constraints in the later iterations. Note that, according to \eqref{eqn:ALM}, we can control the trade-off between satisfying the objective function and satisfying the constraints by changing the values of the parameters $\rho^{\zeta}_m$ and $\rho^{\eta}_m$. Thus, to achieve those computational advantages, we set variable values for the parameters $\rho^{\zeta}_m$ and $\rho^{\eta}_m$. For that, we introduce a new parameter $\rho\ge 1$ that we use to increase the value of the parameters $\rho^{\zeta}_m$ and $\rho^{\eta}_m$ through iterations such that 
\begin{equation}
\begin{aligned}\label{eqn:gammaGamma}
\rho^{\zeta}_{m+1} = \rho \ \rho^{\zeta}_m, \\
\rho^{\eta}_{m+1} = \rho \ \rho^{\eta}_m.
\end{aligned}
\end{equation}

The ADMM iterative scheme of BMC requires user input initial guesses $L_1\in\mathbb{R}^{n\times n}$, $K_1\in\mathbb{R}^{n\times n}$, $\zeta_1\in\mathbb{R}^{n\times n}$, $\eta_1\in\mathbb{R}^{n\times n}$, $\rho^{\zeta}_1\in\mathbb{R}$, and $\rho^{\eta}_1\in\mathbb{R}$. We will explain setting these initial guesses in Sec.~\ref{sec:per_analysis}.

\section{Performance analysis}\label{sec:per_analysis}
Here, we validate BMC using three representative examples based on three datasets: one synthetic dataset that we sampled from a semi-cylinder and two real-life datasets, face images \cite{tenenbaum2016facedata} and handwritten digits \cite{lecun2016the}. We compare the results of BMC with that of five DR methods from the literature, namely ISOMAP, DM, LE, LLE, and HLLE. We provide a brief description of each method below.

\begin{itemize}
\item \emph{Isometric Feature Mapping} (ISOMAP) \cite{Tenenbaum2000}, has one parameter to specify the neighborhood size, say $\delta$. ISOMAP turns an input dataset of points into a graph structure, first, by treating points as nodes, and then, by creating edges between each point and that point's $\delta$ nearest points. The weight of an edge is considered to be the ED between the adjacent two points. ISOMAP approximates the MD between any two points in the dataset as the length of the shortest path between them. Finally, this method transforms the MD matrix into an inner product matrix, as in \eqref{eqn:double_centering}, and the SVs of the inner product matrix are used to produce the latent variables. 

\item The DR method \emph{Diffusion Maps} (DM) \cite{Lafon2006a, Coifman2006b}, creates a graph that is similar to that of ISOMAP which has weights of the edges computed using the Gaussian kernel function with a user input variance $\gamma$. DM defines Markov random walks on this graph for a $t$ (an input parameter) number of time-steps to compute a proximity measure of the data points. These measures are transformed into diffusion distances and the latent variables are computed using the spectral decomposition on the diffusion distance matrix. 

\item \emph{Laplacian Eigenmaps} (LE) \cite{Belkin2001},  computes a low-dimensional representation of the data in which the distances between a datapoint and its $\delta$ nearest neighbors are minimized. This is done in a weighted manner using a Gaussian kernel function with a user input variance $\gamma$ such that the closer the neighbor, the more weight that contributes to the cost function. 

\item \emph{Local Linear Embedding} (LLE) \cite{Roweis2000}, constructs the local properties of the manifold underlying the data by formulating the high-dimensional datapoints as a linear combination of their $\delta$ (an input parameter) nearest neighbors with some weights. LLE attempts to retain the weights of the linear combinations in the low-dimensional reconstruction as close as possible to that of high-dimensional representation. 

\item \emph{Hessian Local Linear Embedding} (HLLE) \cite{Coifman2006b}, is a variant of LLE that minimizes the curviness of the high-dimensional manifold under the constraint that the low-dimensional data representation is locally isometric. This is done by an eigenanalysis of a matrix that describes the curviness, measured by means of the local Hessian, of the manifold around $\delta$ (an input parameter) nearest neighbors at each data point.
\end{itemize}

\subsection{A semi-cylinder having a hollow region}\label{sec:semi-cylinder}
We analyze the performance of BMC by embedding a dataset sampled from a semi-cylinder, a 2-D manifold, having a hollow region. First, we visually compare the embedding performed by BMC with the embedding performed by five DR methods, and then we analyze the SV spectrums of the distance matrices of the manifolds recovered by BMC and the other five DR methods.

We sample 500 points from a hollowed semi-cylinder using 
\begin{equation}\label{eqn:semi-sphere}
y_1 = 4 \cos(\theta), \ \ y_2 = 4 \sin(\theta), \ \ y_3 = z,
\end{equation}
where $\theta \in \mathbb{U}[[0,\pi/3] \cup [2\pi/3,\pi]]$ and $z \in \mathbb{U}[[0,3] \cup [7,10]]$ as in Fig.~\ref{fig:cylinEmb}(a). Here, $\mathbb{U}[a,b]$ denotes an uniform probability distribution from $a$ to $b$. 

Since BMC inputs a lower bound ($D^l$) and a upper bound ($D^u$) of the squared MD matrix, we compute them using \eqref{eqn:lowerBound} and \eqref{eqn:upperBound}. Even though the squared MDs of points in the same cluster of this dataset could easily be computed, we treat these MDs as unknowns to make this problem little harder. This is numerically implemented by setting unequal values for $\alpha^l_{ij}$ and $\alpha^u_{ij}$ for each $ij$ as explained in Sec.~\ref{sec:mc}. Moreover, for all $ij$, we set $\alpha^l_{ij}$ to a small value, say .1, and set $\alpha^u_{ij}$ to a large value, say10, in \eqref{eqn:lowerBound} and \eqref{eqn:upperBound}. We set wide gaps between upper and lower bounds to make sure that their true manifold distances lie between them. Narrow gaps between lower and upper bounds expedite the convergence of BMC; however, the construction of valid narrow gaps is to be done in a separate research that we will summarize in Sec.~\ref{sec:conclusion}. 

We set the initial guesses  arbitrarily as $L_1=\ $$K_1= \ $$\zeta_1= \ $$\eta_1= \ $$.8D^l+.2D^u$. Consider that any matrix between $D^l$ and $D^u$ would work as an initial guess. We also set $\rho^{\zeta}_1=\ $$\rho^{\eta}_1=.05$ and $\rho=1.01$. As we explained in Sec.~\ref{sec:admm}, setting $\rho^{\zeta}_1$ and $\ $$\rho^{\eta}_1$ to a value smaller than one and setting $\rho$ to a value slightly greater than one (this increases the parameters $\rho^{\zeta}_m$ and $\rho^{\eta}_m$ in \eqref{eqn:ALM} through iterations) assures numerical stability and fast convergence of BMC. We run BMC with $r=4$ for 500 iterations. We set $r=4$, since Theorem~\ref{thm:rank} states that the squared distance matrix of a 2-D manifold has four non-zero SVs.

Now, we perform the embedding of the semi-cylinder using the recovered distance matrix $L$. For that, we compute the Gramian of $L=[L_{ij}]_{n \times n}$ using \eqref{eqn:double_centering} and then perform SVD of $\mathrm{Gram}(L)$ using 
\begin{equation}\label{eqn:evd}
\mathrm{Gram}(L)=U\Sigma V^T,
\end{equation}
where $U$ and $V$ denote unitary matrices, and $\Sigma$ denotes a diagonal matrix of SVs. $p$-D latent variables (collectively called an embedding ) of a high-dimensional dataset is given by
\begin{equation}\label{eqn:latent_var}
\hat{X}=I_{p\times n}\Sigma^{1/2}V^T,
\end{equation}
where $I_{p\times n}$ is a matrix representing the first $p$-rows of the identity matrix $I_{n\times n}$ \cite{lee2004nonlinear}. We set $p=2$ to get 2-D embedding of this dataset as in Fig.~\ref{fig:cylinEmb}(b).

We also run the DR methods ISOMAP with $\delta=12$ as in Fig.~\ref{fig:cylinEmb}(c), DM with $\gamma=1$ and $t=10$ as in Fig.~\ref{fig:cylinEmb}(d), LE with $\delta=12$ and $\gamma=1$ as in Fig.~\ref{fig:cylinEmb}(e), LLE with $\delta=12$ as in Fig.~\ref{fig:cylinEmb}(f), and HLLE with $\delta=12$ as in Fig.~\ref{fig:cylinEmb}(g), over this dataset and compute 2-D embeddings. We observe that only BMC and DM preserve the geometry of the data.   However, we will demonstrate on the real-world datasets that BMC provides substantial benefits over DM in more challenging problems.

\begin{SCfigure*}
  \centering
  \caption{2-D embeddings of a cylindrical dataset having a hollow region. (a) A dataset of 500 points is embedded in 2-D using (b) BMC with $r=4$, $\rho=1.01$, $\rho^{\zeta}_1=\rho^{\eta}_1=.05$, and $\alpha^l_{ij}=.1$ and $\alpha^u_{ij}=10$ for all $ij$; (c) ISOMAP with $\delta=12$; (d) DM with $\gamma=1$ and $t=10$; (e) LE with $\delta=12$ and $\gamma=1$; (f) LLE with $\delta=12$; and (g) HLLE with $\delta=12$.}
  \includegraphics[width=0.75\textwidth]{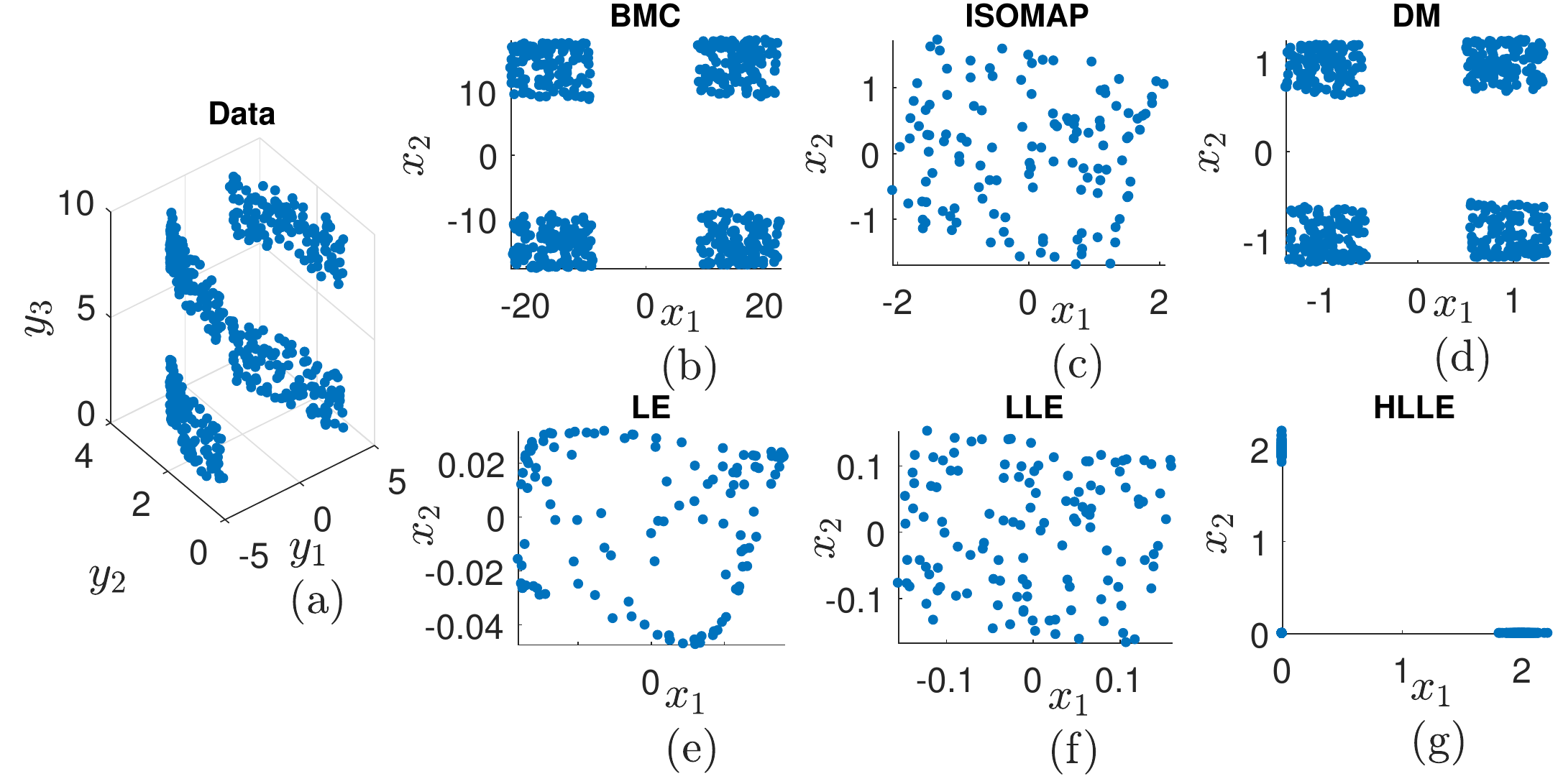}
\label{fig:cylinEmb}
\end{SCfigure*}

Now, we analyze the SV spectrum of BMC and that of the other five DR methods. First, we compute the 10 largest SVs of the recovered matrix $L$. Computations of the SV spectrums of the other embeddings are performed under two steps. First, we embed the dataset in 10-D using ISOMAP, DM, LE, LLE, and HLLE. Then, we compute the ED matrix of the embedding of each of the five methods and then compute the SV spectrum for each of the distance matrices. As the ground-truth, we compute the 10 largest SVs of the squared MD matrix of the data (MD takes curvature into consideration that can easily be computed since this data is sampled from cylinder). For the SVs $\{\sigma^{(j)}| j = 1, \dots, 10\}$ of the squared distance matrix of an embedding, we define the percentage of normalized SVs as
\begin{equation}
\hat{\sigma}^{(j)}=100\frac{\sigma^{(j)}}{\sum^{10}_{j=1} \sigma^{(j)}}.
\end{equation}
Fig.~\ref{fig:SVvsDr} shows $\ln(\hat{\sigma}^{(j)})$; $j=1,\dots,10$ of all the DR methods. We observe in this figure that the first four $\hat{\sigma}^{(j)}$'s of BMC are nonzero and the rest of them are approximately zero which is similar to that of the manifold (ground-truth). However, none of the DR methods except BMC have zeros after $\hat{\sigma}^{(4)}$. Thus, except BMC all the other methods do not recover the solution guaranteed by Theorem \ref{thm:rank} and that might lead to an embedding which is not representative to the dataset.

\begin{figure}[htp]
    \centering
    \includegraphics[width=3.3in]{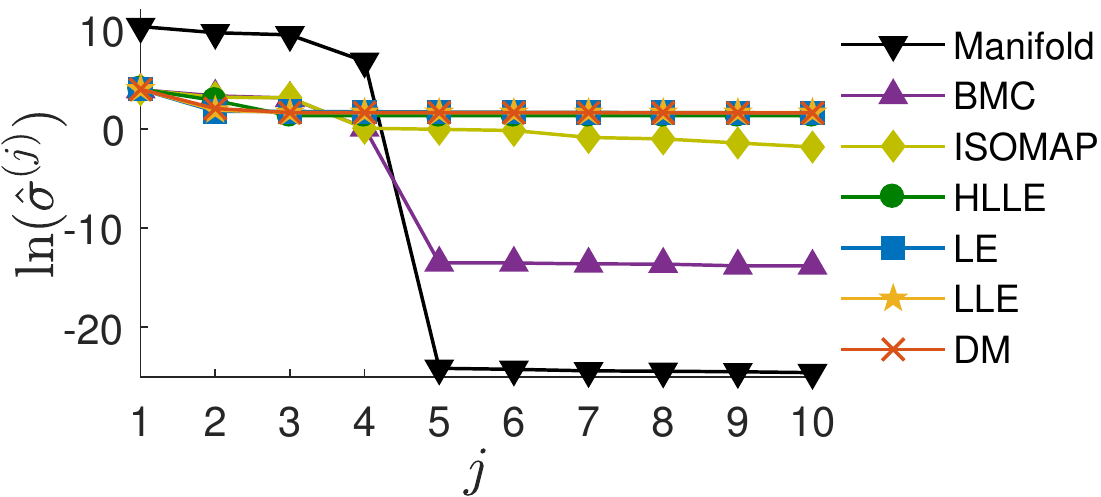}
    \caption{Semi-log plots of percentage of normalized SVs of the squared manifold distance matrix and that of the squared distance matrices of the embeddings of BMC and the other DR methods.}
\label{fig:SVvsDr}
\end{figure}

\subsection{Handwritten digits}\label{sec:handwritten}
Now, we embed a sample of handwritten digits obtained from the Modified National Institute of Standards and Technology (MNIST) database \cite{lecun2016the}. It is well known \cite{lecun2016the} that the low-dimensional representations MNIST handwritten digits show rich clustering which we will analyze using BMC and the other DR methods in this section. The MNIST database has 70,000 images of handwritten digits 0, 1, $\dots$, 9 from which we sample (uniform random sampling) 30 images of arbitrary digits of each 0, 1, 3, and 4, (the image-set size is 120 images of digits) as in Fig~\ref{fig:mnist}(a). 

Each image in the MNIST database is $28 \times 28$ dimensions. Since each pixel represents one dimension, each image is a point in a 784-D observable space \cite{gajamannage2018nonlinear}. We reshape each such image into a row vector and concatenate these 120 row vectors (note that, this image-set has 120 images) vertically to produce the dataset. We are not aware of information about the data such as hollow regions or clusters like those in the semi-cylinder dataset in Sec.~\ref{sec:semi-cylinder}. Thus, we treat each point of this dataset as a separate cluster, and compute the lower bound ($D^l$) and the upper bound ($D^u$) of the MDs using \eqref{eqn:lowerBound} and \eqref{eqn:upperBound} with $\alpha^l_{ij}=.1$ and $\alpha^u_{ij}=10$ for all $ij$. Consider that computing the bounds $D^l$ and $D^u$, and setting the parameters $\alpha^l_{ij}$ and $\alpha^u_{ij}$ are done in a similar manner as those in the semi-cylinder example. Here, we set the initial guesses as the same as those in the semi-cylinder example. Since we are not aware of the dimensionality of the manifold underlying the image-set, we are unable to set $r$ based on Theorem~\ref{thm:rank}. However, we set $r=4$ for BMC to generate a 2-D embedding which we can easily visualize \footnote{Other choices of $r$ are, of course, possible.}. We run BMC with $\rho=1.02$ (setting $\rho$ is based on the same argument as that in the semi-cylinder example) for 800 iterations and compute the recovered squared distance matrix $L$. We compute the Gramian of $L$ using \eqref{eqn:double_centering} and perform SVD of $\mathrm{Gram}(L)$ using \eqref{eqn:evd}. We estimate $2$-D embedding of the this dataset by setting $p=2$ in \eqref{eqn:latent_var}.

We run DR methods ISOMAP with $\delta=12$ as in Fig.~\ref{fig:mnist}(c), DM with $\gamma=1$ and $t=10$ as in Fig.~\ref{fig:mnist}(d), LE with $\delta=12$ and $\gamma=1$ as in Fig.~\ref{fig:mnist}(e), LLE with $\delta=12$ as in Fig.~\ref{fig:mnist}(f), and HLLE with $\delta=12$ as in Fig.~\ref{fig:mnist}(g), over this dataset and compute 2-D embeddings. We see in Fig.~\ref{fig:mnist} that while BMC, ISOMAP, and LE assure some level of clustering of digits, DM, LLE, and HLLE fail to preserve the clustering. In order to learn the clustering ability in a qualitative manner, we numerically analyze the misclustering of digits using k-Means Clustering (KMC) method available in \cite{lloyd1982least}. KMC is a data-partitioning algorithm that assigns given observations, say $n$, to exactly one of the given clusters, say $k$ in total, defined by centroids where $k$ is chosen before the algorithm starts. We set $k=4$ in KMC as the dataset consists of four digits and find the clustering for each digit. If a digit is misclustered, then we add one penalty towards the error. The clustering error, denoted by $\mathcal{E}_c$, is defined as the percentage of misclustering in the dataset. We compute the clustering error for each embedding and display them at the titles of Figs.~\ref{fig:mnist}(b-g). The clustering errors justify that BMC performs well in terms of preserving the clustering in contrast to that of the other DR methods. 

\begin{SCfigure*}
  \centering
  \caption{2-D embeddings and clustering errors of a image-set of 120 handwritten digits. (a) A sample of 60 images of the image-set. This image-set is embedded in 2-D using (b) BMC with $r=4$, $\rho=1.02$, $\rho^{\zeta}_1=\rho^{\eta}_1=.1$, and $\alpha^l_{ij}=.1$ and $\alpha^u_{ij}=10$ for all $ij$; (c) ISOMAP with $\delta=12$; (d) DM with $\gamma=1$ and $t=10$; (e) LE with $\delta=12$ and $\gamma=1$; (f) LLE with $\delta=12$; and (g) HLLE with $\delta=12$. The title at each embedding also display its clustering error $\mathcal{E}_c$.}
  \includegraphics[width=0.75\textwidth]{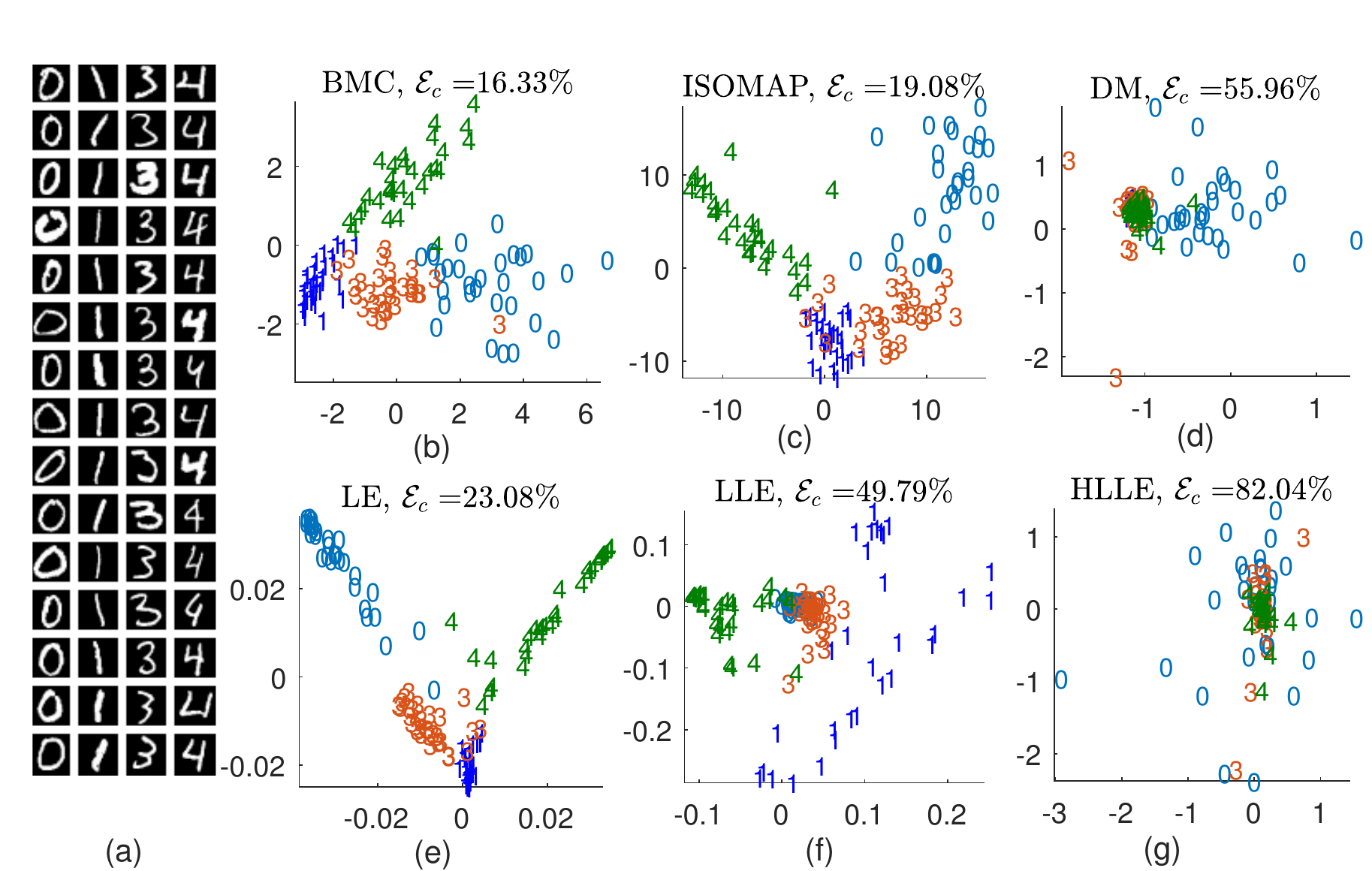}
\label{fig:mnist}
\end{SCfigure*}

\subsection{Face images}\label{sec:faceimages}
 In this section, we embed a noisy sample of face images obtained from \cite{tenenbaum2016facedata} and learn BMC's ability to preserve the underlying topology. This face image database consists of 600 face images of $64\times 64$ dimensions having three variations:  left-right orientation of light change; left-right pose change; and up-down pose change. Note that, each image of this database is a point in a 4096-D observable space.

We sample (uniform random sampling) 300 images from this database and contaminate them with a Gaussian noise of variance one as in Fig~\ref{fig:face_images}(a). Similar to the handwritten digit example, we reshape each of these image into a row vector and concatenate all such 300 rows vertically to form the dataset. Here, we are not aware of geographical information about this dataset such as hollow regions or clusters like those in the semi-cylinder dataset in Sec.~\ref{sec:semi-cylinder}. Thus, we treat each point as a separate cluster, and compute $D^l$ and $D^u$ of the MDs using \eqref{eqn:lowerBound} and \eqref{eqn:upperBound} with $\alpha^l_{ij}=.1$ and $\alpha^u_{ij}=10$ for all $ij$ in a similar manner as those in the semi-cylinder example. We also  set the initial guesses as same as those in the semi-cylinder example. Since each image has three variables (2-D pose and 1-D orientation of light), the underlying manifold is 3-D which is incorporated by setting $r=5$ according to Theorem~\ref{thm:rank}. We run BMC with $\rho=1.04$ (setting $\rho$ is based on the same argument as that in the semi-cylinder example) for 800 iterations and compute the recovered squared distance matrix $L$. We compute the Gramian of $L$ using \eqref{eqn:double_centering} and perform SVD of $\mathrm{Gram}(L)$ using \eqref{eqn:evd}. We estimate $2$-D embedding of this dataset by setting $p=2$ in \eqref{eqn:latent_var}, see Fig.~\ref{fig:face_images}(b). 

We run the DR methods ISOMAP as in Fig.~\ref{fig:face_images}(c), DM as in Fig.~\ref{fig:face_images}(d), LE as in Fig.~\ref{fig:face_images}(e), LLE as in Fig.~\ref{fig:face_images}(f), and HLLE as in Fig.~\ref{fig:face_images}(g), over this dataset with the parameter values the same as those in the handwritten example and compute 2-D embeddings. We see in Fig.~\ref{fig:face_images} that while HLLE performs a weak embedding (most of the images are projected onto a small area), the other methods demonstrate some level of good embedding. 

\begin{SCfigure*}
  \centering
  \caption{2-D embeddings of a image-set of 300 faces. (a) A sample of 60 face images of the image-set whereas columns from left to right indicate left-right light change, left-right pose change, up-down pose change, and a noisy version of images, respectively. This image-set is embedded in 2-D using (b) BMC with $r=5$, $\rho=1.04$, $\rho^{\zeta}_1=\ $ $\rho^{\eta}_1=.1$, and $\alpha^l_{ij}=.1$ and $\alpha^u_{ij}=10$ for all $ij$; (c) ISOMAP with $\delta=12$; (d) DM with $\gamma=1$ and $t=10$; (e) LE with $\delta=12$ and $\gamma=1$; (f) LLE with $\delta=12$; and (g) HLLE with $\delta=12$. The snapshots show the appearance of some faces on this manifold.}
  \includegraphics[width=0.7\textwidth]{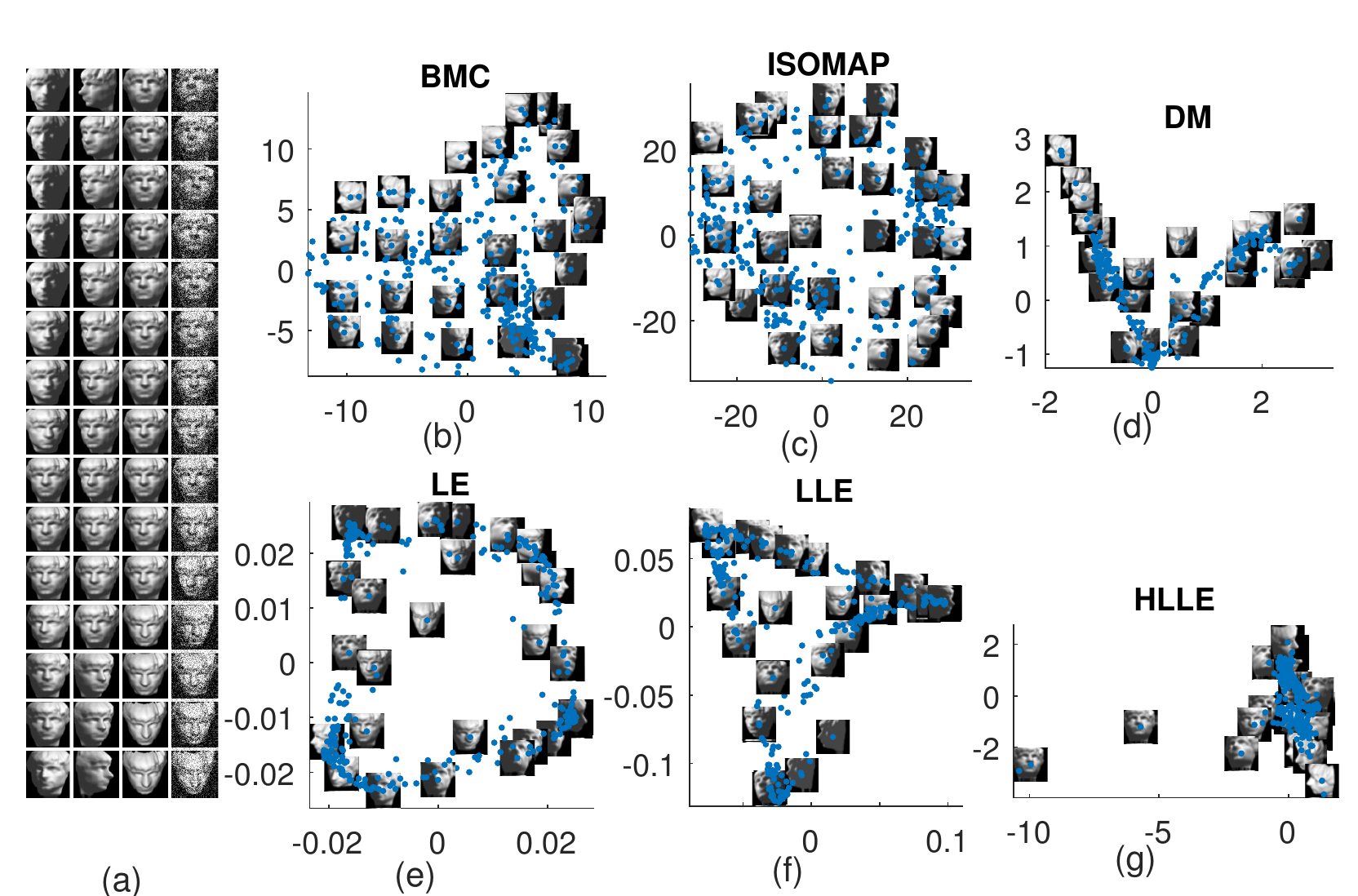}
\label{fig:face_images}
\end{SCfigure*}

In order to quantitatively analyze the embedding performance, we compute neighborhood preserving error \cite{gajamannage2015dimensionality}, denoted by $\mathcal{E}_{\delta'}$. For that, first, we make an adjacency distance matrix $A^d = [A^d_{ij}]_{n\times n}$ of the dataset such that $A^d_{ij}=1$, if the $j$-th point is one of $\delta'$ neighbors (an input parameter) of the $i$-th point; or $A^d_{ij}=0$ otherwise. Note that, the parameter $\delta'$ is similar to $\delta$ of ISOMAP in Sec.~\ref{sec:per_analysis}. Similarly, we make an adjacency distance matrix $A^e = [A^e_{ij}]_{n\times n}$ of an embedding of interest. The neighborhood preserving error is defined as the percentage of normalized cumulative absolute difference between the adjacency distance matrices $A^d$ and $A^e$,
\begin{equation}\label{eqn:adj_error}
\mathcal{E}_{\delta'}=\frac{100}{\delta' (n-1)}\sum_{i,j=1}^{n}\big\vert A^d_{ij}-A^e_{ij}\big\vert,
\end{equation}
where $\delta'$ is the size of the neighborhood of interest\cite{gajamannage2015dimensionality}. Fig.~\ref{fig:face_image_err} shows $\mathcal{E}_{\delta'}$ versus $\delta'=1,\dots,20$ for both BMC and the other DR methods. We observe in this figure that BMC preserves the neighborhood than that of the other DR methods for all the neighborhood sizes 1--20. Moreover, the neighborhood preserving ability of BMC is better than that of the other methods when the neighborhood size is small.

\begin{figure}[h]
	\centering
        	\includegraphics[width=3.2in]{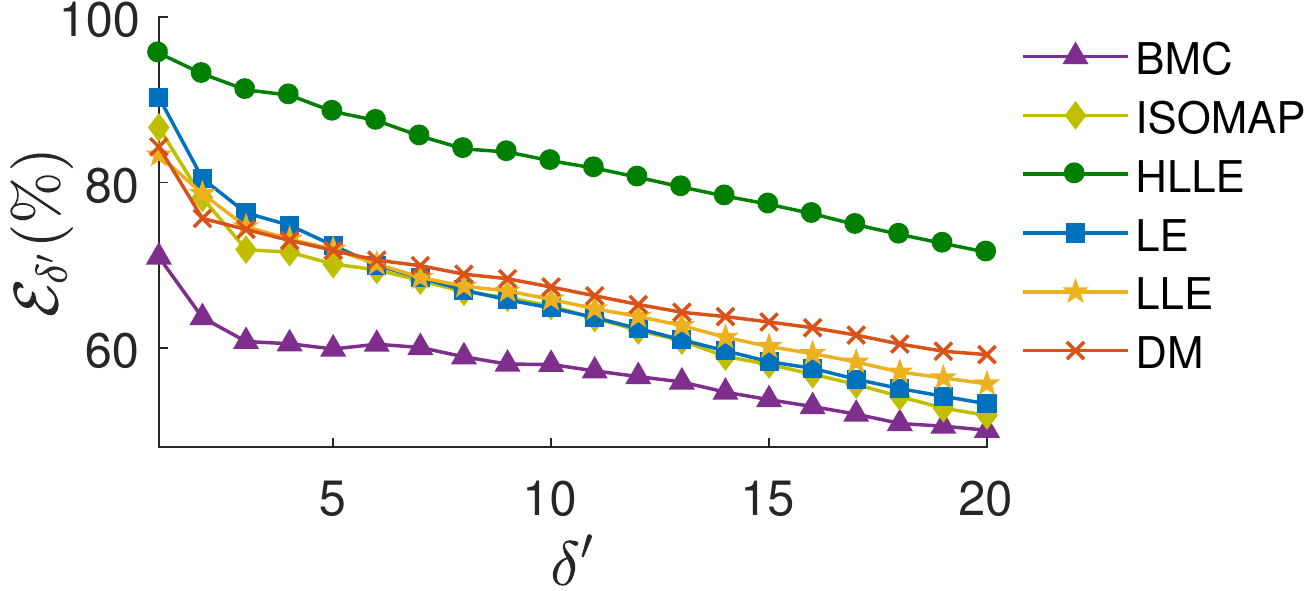}
        	\caption{Neighborhood preserving ability of BMC and the other DR methods versus different neighborhood sizes.}
	\label{fig:face_image_err}
\end{figure}

\section{Conclusion}\label{sec:conclusion}
In this paper, we presented an MC scheme that 1) reveals a distance matrix for a partially observed matrix; 2) employs truncated rank; and 3) inputs the bounds for each entry of the partially observed matrix so that the recovered matrix should satisfy those bounds. This MC scheme inherits truncated nuclear norm regularization for truncated rank from \cite{hu2012fast} and inherits recovery of the full matrix using bounds from \cite{paffenroth2013space}. In order to assure that the recovered matrix is a distance matrix, we imposed that the recovered matrix is PSD using a novel approach as stated in Sec.~\ref{sec:admm}. Fig.~\ref{fig:SVvsDr} justifies that BMC is robust of revealing an accurate low-rank approximation of a partially observed distance matrix that has the same rank as that of the manifold.

We observed in Sec.~\ref{sec:handwritten} that BMC attains the least clustering error among all the DR methods of interest. Thus, it justifies that BMC provides superior clustering when it embeds a cluster rich dataset like MNIST. However, the clustering error of  BMC was as high as $16.33\%$ since we performed a 2-D embedding without knowing the true dimensionality of the dataset. Thus, before performing this analysis, the true dimensionality of the manifold underlying the data should be revealed. In the future, we will run BMC with $r=3,\ $$\dots, m$ for some $m$ and then analyze the SV spectrums of the recovered matrices to learn the true manifold dimensionality. 

The face image example in Sec.~\ref{sec:faceimages} demonstrated BMC's capability of embedding a real-life dataset that is contaminated with a high level of noise. We observed that BMC preserves neighborhoods when it maps data onto an embedding better than the maps of the other DR methods of interest. Topological DR is a process of transforming high-dimensional data onto a low-dimensional embedding such that the data is homeomorphic \footnote{A homeomorphism is defined as an equivalence relation and a one-to-one correspondence between points in two topological spaces $\tau_1$ and $\tau_2$ that is continuous in both directions $\tau_1\rightarrow\tau_2$ and $\tau_2\rightarrow\tau_1$.} to the embedding. Since a homeomorphism can be explained as a continuous map of neighborhoods between data and embedding, neighboring points in the dataset should also be neighboring points in the embedding.  Thus, the stronger the homeomorphism the lesser the neighborhood preserving error that we have seen from BMC in Fig.~\ref{fig:face_image_err}.

In the examples of this paper, we set wide gaps between lower and upper bounds. Given these bounds, BMC revealed a low-rank recovery distance matrix which showed superior embedding than that of the other DR methods of interest. However, due to wide bounds, BMC might reveal a low-rank distance matrix that is not close enough to the real distance matrix and encounter a considerable error. This is clearly observed by both the clustering error ($\mathcal{E}_c$) in Fig.~\ref{fig:mnist} and the neighborhood preserving error ($\mathcal{E}_{\delta'}$) in Fig.~\ref{fig:face_image_err}. Although BMC is made to recover a low-rank recovery within bounds, we will be able to recover the distance matrix of interest by setting these bounds sufficiently narrow. For that, we plan to use narrow bounds by taking the curvature of the manifold into consideration. Specifically, Menger curvature routing \cite{pajot2003analytic}, is used to estimate the point-wise curvatures of the observed points on the manifold, then these curvatures are interpolated over the missing portion (like holes), if any, to estimate the rest of the point-wise curvatures. Based on the curvature, we will compute close lower and upper bounds for the missing part of the distance matrix and then recover it using BMC.

BMC reveals a PSD low-rank representation of a partially observed distance matrix whereas the recovered matrix satisfies user input bounds of the distances. However, it would be interesting to test if we could have recovered a low-rank representation $L$ of the squared distance matrix $D^2$ along with a sparse representation $S$ such that $D^2=L+S$. This modification enables us to extract not only the low-rank potion of the distance matrix but also the sparse portion that will help us to study anomalies in the dataset. Detecting anomalies is helpful in learning bank frauds, structural defects, medical problems, cyber attacks, etc. In this new MC scheme, we could utilize the same truncated nuclear norm along with the PSD constraint, as in the BMC scheme, to ensure the low-rankness of $L$, and we could employ $l_0$ norm of $S$ to extract a sparse matrix $S$ from $D^2$ as presented in \cite{wright2009robust}. However, $l_0$ norm of $S$ is non-convex and discontinuous that we could replace by the convex relaxation $l_1$ norm of $S$, \cite{wright2009robust}. This novel MC scheme is extended from BMC under two new modifications to the MC routing given in \eqref{eqn:optRouting1}: 
\begin{enumerate}
\item The objective function is \\$\underset{L, K, S}{\text{minimize}} \ \ \ \|L\|_* - \mathrm{tr}(U_rKV^T_r)+\beta \|S\|_1$;
\item The second constraint is $D^l\le K+S \le D^u$;
\end{enumerate}
where the parameter $\beta$, $0\le\beta\le1$, controls the trade-off between $L$ and $S$. 

We believe that this is the first time a PSD low-rank truncated MC technique that inputs the partially observed matrix as bounds has been developed.  Thus, we will be able to implement this technique for many applications in networking, financial market analysis, and collective motion. Specifically, we will utilize BMC technique to capture and characterize network topology of social networks such as Facebook, Arxiv, and Enron e-mail. We will also utilize BMC scheme to map network topology of 2-D and 3-D sensor networks from partial virtual coordinates to graph geodesics \cite{Jayasumana2018network}. In addition, we are interested in DR of collective motion such as schools of fish and flocks of birds. Our recent work, \cite{Gajamannage2015b, gajamannage2017detect}, concludes that the transitions in collective motion can be represented as switchings of the underlying manifolds which we will study further in an efficient manner by utilizing BMC technique. 

\ifCLASSOPTIONcompsoc
\section*{Acknowledgments}
 The authors would like to thank NSF XSEDE for allocating computational time  from the resource Jetstream under the request number DMS180007.
\else
 
  \section*{Acknowledgment}
\fi

\ifCLASSOPTIONcaptionsoff
  \newpage
\fi

\bibliographystyle{IEEEtran}


\end{document}